\def\eqref#1{equation~\ref{#1}}
\def\1{\bm{1}}
\def\va{{\bm{a}}}
\def\vq{{\bm{q}}}
\def\vx{{\bm{x}}}
\DeclareMathAlphabet{\mathsfit}{\encodingdefault}{\sfdefault}{m}{sl}
\SetMathAlphabet{\mathsfit}{bold}{\encodingdefault}{\sfdefault}{bx}{n}
\newcommand{\E}{\mathbb{E}}
\newcommand{\KL}{D_{\mathrm{KL}}}
\colorlet{llgray}{lightgray!40}
\colorlet{llgray}{lightgray!40}
\title{NFT: Bridging Supervised Learning and Reinforcement Learning in Math Reasoning}
\author{
Huayu Chen$^{1,2}$ \hspace{1em}
Kaiwen Zheng$^{1,2}$ \hspace{1em}
Qinsheng Zhang$^2$ \hspace{1em}
Ganqu Cui$^1$ \hspace{1em}
Lifan Yuan$^3$ \vspace{-4mm}\And
\hspace{-0.7em}Yin Cui$^{2}$ \hspace{1em}
Haotian Ye$^{2,4}$ \hspace{1em}
Tsung-Yi Lin$^{2}$ \hspace{1em}
Ming-Yu Liu$^{2}$ \hspace{1em}
Jun Zhu$^{1}$\thanks{Corresponding author.} \hspace{1em}
Haoxiang Wang$^{2}$ \vspace{4mm}\\
\hspace{8em} $^1$Tsinghua \quad
$^2$NVIDIA \quad
$^3$UIUC \quad
$^4$Stanford \quad
\vspace{3mm}\\
\textbf{\url{https://research.nvidia.com/labs/dir/Negative-aware-Fine-Tuning}}
\vspace{-5mm}
}
\theoremstyle{plain}
\newtheorem{theorem}{Theorem}[section]
\newtheorem{proposition}[theorem]{Proposition}
\theoremstyle{definition}
\theoremstyle{remark}
\newtheorem{remark}[theorem]{Remark}
\begin{document}

\maketitle
\begin{abstract}
Reinforcement Learning (RL) has played a central role in the recent surge of LLMs' math abilities by enabling verification-driven training through binary verifier signals. In contrast, Supervised Learning (SL) is rarely considered for such verification-driven training, largely due to its heavy reliance on reference answers and inability to reflect on mistakes. In this work, we challenge the prevailing notion that self-improvement is exclusive to RL and propose Negative-aware Fine-Tuning (NFT) --- a supervised approach that enables LLMs to reflect on their failures and improve autonomously with no external teachers. In online training, instead of throwing away self-generated negative answers, NFT constructs an \textit{implicit} negative policy to model them. This implicit policy is parameterized with the same positive LLM we target to optimize on positive data, enabling direct policy optimization on all LLMs' generations. 
We conduct experiments on 7B and 32B models in math reasoning tasks. Results consistently show that through the additional leverage of negative feedback, NFT significantly improves over SL baselines like rejection fine-tuning, matching, or even surpassing leading RL algorithms like GRPO and DAPO. Furthermore, we demonstrate that NFT and GRPO are actually equivalent in strict-on-policy training, even though they have entirely different theoretical foundations. Our experiments and theoretical findings bridge the gap between SL and RL methods in binary-feedback learning systems.
\end{abstract}

\section{Introduction}
The recent surge in math reasoning abilities of Large Language Models (LLMs) is largely driven by a fundamental shift in their learning paradigm, from imitation to self-improvement \citep{deepseekr1, jaech2024openai, acereason2025}. Instead of relying on reference answers supplied by human annotators or stronger models \citep{liu2024deepseek, OpenAI2023GPT4TR}, the new paradigm requires only a question dataset with a binary verifier to judge the correctness of self-generated answers. 
By reflecting on their own generation, LLMs can improve autonomously. 
This approach not only eliminates the need for costly data annotation but also removes competence ceilings imposed by external teachers, offering a promising path toward general intelligence \citep{deepseekr1,cosmosr1}. 

Reinforcement Learning (RL) appears to be a natural fit for such verification-driven training. Specific algorithms like PPO \citep{ppo} and GRPO \citep{grpo} are explicitly designed to maximize reward signals, which can conveniently take the form of a binary verifier outcome. In contrast, Supervised Learning (SL) is rarely considered for realizing such verification-driven training.  A common view holds that SL is inherently designed to memorize the positive data, rendering it unsuitable for self-reflective learning from negative mistakes \citep{chu2025sft}.

\emph{In this work, we challenge the prevailing notion that verification-driven training is exclusive to RL, and demonstrate it can be similarly achieved within the supervised learning paradigm.}

We start with a simple SL baseline: Rejection Fine-Tuning (RFT) \citep{rftcaling, raft}. At each iteration, an LLM generates answers to questions. A verifier helps reject all negative answers. The remaining positive ones are compiled into a dataset to fine-tune the LLM itself in a supervised manner. 
RFT has been demonstrated effective \citep{bai2022constitutional, yuan2023rrhf,song2023preference, llama2,raft++}. However, it prevents any learning from negative feedback. 
LLMs are encouraged to reinforce what they already perform well, rather than reflect on their mistakes — an ability we believe critical for achieving general intelligence.

\begin{figure}[t]
\centering
% \hspace{1cm}
\includegraphics[width = 0.9\linewidth]{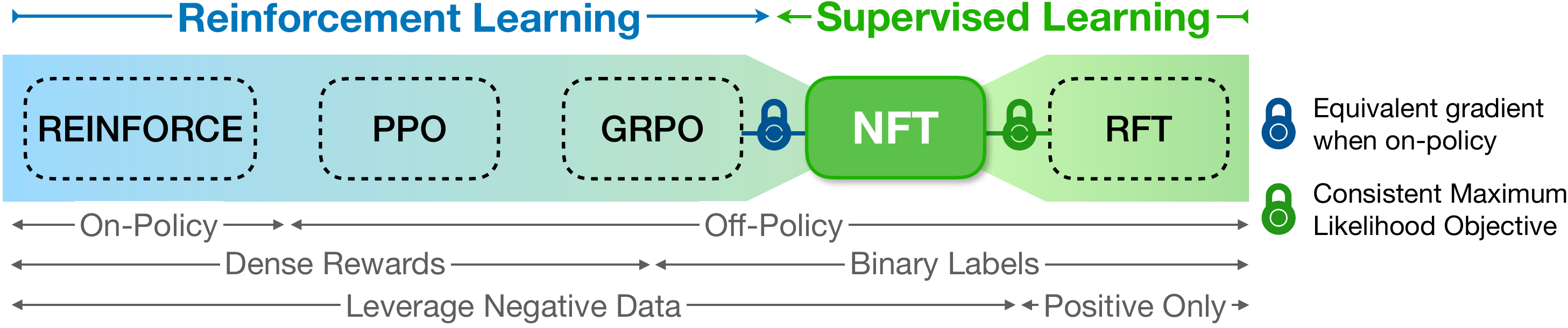}
\caption{\label{fig:relationship} A spectrum of online algorithms for LLM fine-tuning. NFT bridges reinforcement learning and supervised learning methods through the leverage of negative feedback via supervision.}
\vspace{-2mm}
\end{figure}
To overcome this limitation, we propose Negative-aware Fine-Tuning (NFT), an online learning algorithm that enables LLMs to learn from their negative generations (Sec. \ref{sec:method}). Like RFT, NFT fine-tunes a positive LLM on positive answers via supervision. Crucially, 
instead of throwing away negative answers, NFT also constructs an \textit{implicit} negative policy to model them. This implicit policy is parameterized with the same positive LLM we target to optimize on positive data, enabling direct policy optimization on all LLMs' generations (Figure \ref{fig:implicitnegativepolicy}). NFT has minimal memory overhead, as only a single model is maintained throughout training.

To understand the connection between NFT and RL approaches, we conduct an in-depth comparison between NFT and GRPO (Sec. \ref{sec:connection}). Surprisingly, we find the two methods are actually equivalent in \textit{strict-on-policy} training, despite that they originate from entirely different theoretical frameworks (Figure \ref{fig:relationship}). Notably, the ``advantage normalization'' characteristic of GRPO is already implicitly reflected in NFT's loss function. Their main difference arises in \textit{off-policy} settings, regarding different strategies for clipping model gradients when the learned policy deviates from the old policy. These observations suggest a strong connection between SL and RL in binary-feedback learning systems.

We evaluate NFT on 7B and 32B Qwen models and report two key findings: 1. Supervised Learning alone can significantly enhance LLMs' math reasoning with no external teachers. NFT matches or even surpasses state-of-the-art RL algorithms like GRPO \citep{grpo} and DAPO \citep{dapo}. 2. The performance gap between RFT and RL in online training largely stems from SL’s past inability to leverage negative feedback, rather than any inherent superiority of RL. Through the additional leverage of negative data, NFT substantially mitigates this gap.

\section{Background}
\subsection{Maximum Likelihood vs. Policy Gradient}
\textbf{Supervised Learning} essentially aims to learn a model $\pi_\theta(\va|\vq)$ to fit the data distribution $\pi(\va|\vq)$. This can be realized by employing the maximum-likelihood objective:
\begin{equation*}
     \max_\theta \E_{\va \sim \pi(\va|\vq)} \log \pi_\theta(\va|\vq)\Leftrightarrow \min_\theta \KL\left[\pi(\va|\vq) \| \pi_\theta(\va|\vq)\right].
\end{equation*}
In LLM fine-tuning, $\vq$ usually means the question prompt, and $\va$ the answer. To perform Maximum-likelihood training, we need a dataset $\mathcal{D} = \{\vq, \va \sim \pi(\va|\vq)\}$ to draw training samples from. 

\textbf{Reinforcement Learning}, by contrast, maximizes pre-defined reward $r(\vq, \va)$ for $\va\sim \pi_\theta(\va|\vq)$:
\begin{equation*}
    \max_\theta J(\theta):= \E_{\va \sim \pi_\theta (\va|\vq)}r(\vq,\va).
\end{equation*}
Directly back-propagating through $J(\theta)$ is non-trivial, as $r(\cdot)$ can be an arbitrary scalar function whose gradient is unknown. Luckily, $\nabla_\theta J(\theta)$ can be estimated, making policy optimization feasible. 
\begin{equation}
\label{eq:policy_gradient}
    \nabla_\theta J(\theta)= \E_{\va \sim \pi_\theta (\va|\vq)} \nabla_\theta\left[r(\vq, \va) \log \pi_\theta(\va|\vq)\right].
\end{equation}
Eq.~\ref{eq:policy_gradient} is known as the Policy Gradient (PG) algorithm \citep{pg}. In sequential decision-making problems such as language reasoning, $\va$ can be interpreted as the token decision for each step $t$, and $r(\vq, \va)$ can be replaced with advantage functions $A(\vq, \va)$ \citep{trpo}.
\begin{figure}[t]
\centering
\includegraphics[width = 0.8\linewidth]{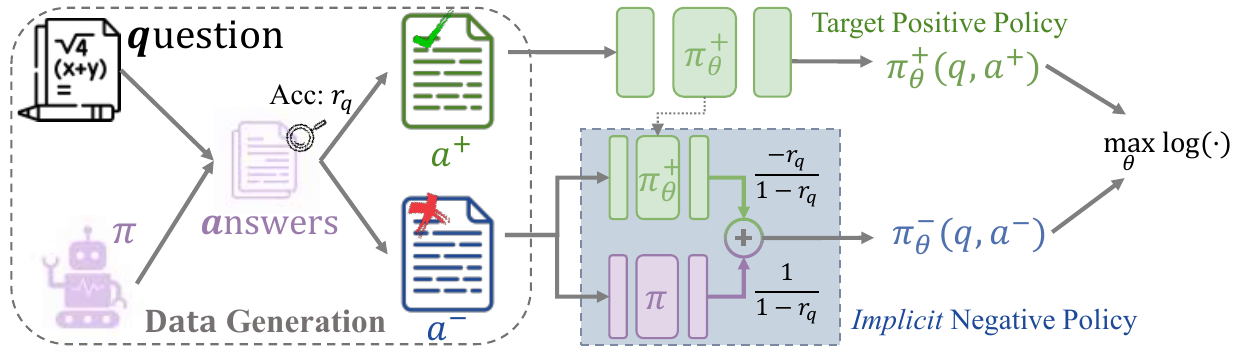}
\caption{\label{fig:implicitnegativepolicy} Illustration of the NFT algorithm. \textbf{Data Collection:} An LLM $\pi$ generates answers to a set of math questions. Generation results are split into two sub-datasets based on their answer correctness. \textbf{Policy Optimization:} By constructing an \textit{implicit} policy for modeling negative data, NFT enables direct policy optimization on both positive and negative answers via maximum-likelihood.}
\end{figure}
\subsection{Math Reasoning RL: From Policy Gradient to GRPO}
Eq.~\ref{eq:policy_gradient} requires training to be \textit{on-policy}, where $\pi_\theta$ can only be updated a single time after data collection. To break this limitation, importance sampling can be applied \citep{ppo}. Suppose the policy for collecting the RL dataset is denoted as $\pi_\text{old}$, we have
\begin{equation}
\label{eq:is}
    \nabla_\theta J(\theta)= \E_{\va \sim \pi_\text{old} (\va|\vq)} \left[\frac{\pi_\theta(\va|\vq)}{\pi_\text{old}(\va|\vq)}r(\vq, \va) \nabla_\theta \log \pi_\theta(\va|\vq)\right] = \E_{\va \sim \pi_\text{old} (\va|\vq)} \left[r(\vq, \va) \nabla_\theta  R_\theta(\vq,\va) \right],
\end{equation}
\vspace{-1mm}
where $R_\theta(\vq,\va) := \frac{\pi_\theta(\va|\vq)}{\pi_\text{old}(\va|\vq)}$ is the likelihood Ratio between two policies. 

In math reasoning tasks, PPO \citep{ppo} and subsequent GRPO \citep{grpo} algorithms further apply gradient clipping to constrain the distance between $\pi_\theta$ and $\pi_\text{old}$:
\begin{equation}
\label{eq:GRPO_pratical_loss}
\mathcal{L}^\text{GRPO}(\theta) = - \sum_{\vq, \va \sim \pi_\text{old}} \sum_t \text{min} \left[R_\theta^t(\vq, \va) \hat{A}_{\vq,\va}, \text{clip}(R_\theta^t(\vq, \va), 1-\epsilon', 1+\epsilon')\hat{A}_{\vq,\va} \right],
\end{equation}
where $R_\theta^t(\vq, \va):=\frac{\pi_\theta(\va_t|\vq,\va_{<t})}{\pi_\text{old}(\va_t|\vq,\va_{<t})}$, and $\hat{A}_{\vq,\va}$ is the estimated advantage value. Note that we have dropped some auxiliary loss terms, such as KL and entropy regularization, as they have been pointed out to be unnecessary by recent studies like DAPO \citep{dapo}.

GRPO proposes an efficient way to estimate $\hat{A}_{\vq,\va}$. Collect $K$ answers $\va^{1:K}$ and their binary reward $r^{1:K} \in \{0,1\}$ for each question. The advantage is defined to be the normalized reward:
\begin{equation}
\label{eq:normalized_advantage}
    \hat{A}_{\vq,\va} := \left[{r(\vq,\va) - \texttt{mean}\{r^{1:K}\}}\right]/ \; {\texttt{std}\{r^{1:K}\}}.
\end{equation}
Later studies \citep{drgrpo} suggest removing the \texttt{std} term from Eq.~\ref{eq:normalized_advantage}. 

\section{Method}
\label{sec:method}
\subsection{Problem Setup}
\textbf{Dataset.}~Given a set of $N$ math questions $\{\vq^{1:N}\}$, a pretrained LLM $\pi_\text{old}(\va|\vq)$, and a verifier for judging the correctness. In every iteration, we generate a dataset $\mathcal{D}=\{\vq, \va^{1:K} \sim \pi, r^{1:K}\}^{1:N}$, where $r \in \{0, 1\}$ is the correctness label, and $K$ the number of answers collected for each question.

Dataset $\mathcal{D} \sim \pi$ can be split into two subsets: $\mathcal{D}^+$ and $\mathcal{D}^-$. $\mathcal{D}^+$ contains all positive answers, and $\mathcal{D}^-$ contains the rest negative ones. We denote the underlying answer distribution of $\mathcal{D}^+$ as $\pi^+(\cdot|\vq)$.

\textbf{Learning Target.}~We want to optimize the old policy $\pi$ into a new policy $\pi^+_\theta \approx \pi^+$. The target $\pi^+(\va|\vq)$ can be formalized using Bayes' Rule:
\begin{equation}
    \label{eq:positive_distribution}
    \pi^+(\va|\vq) := \pi(\va|\vq, r{=}1) = \frac{\pi_\text{old}(\va|\vq) p(r{=}1|\vq, \va)}{\sum_{A}\pi_\text{old}(\va|\vq) p(r{=}1|\vq, \va)},
\end{equation}
where $A$ means all possible language space for $\va$. 
\begin{figure}
\centering
\includegraphics[width = \linewidth]{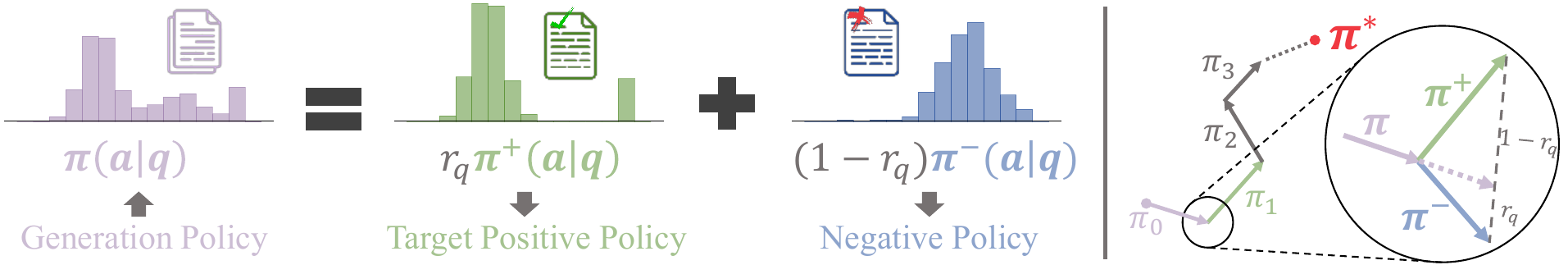}
\caption{\label{fig:distribution_split} \textbf{Left:} Policy Splitting. The generation policy can be split into a positive policy and a negative policy, and re-expressed as their linear combination. \textbf{Right:} Policy Improvement. By iteratively optimizing towards its positive split, an LLM policy $\pi_0$ can improve continuously.}
\end{figure}

\textbf{Discussion.}~An obvious solution for learning $\pi^+$ is to fine-tune solely on correct answers ($\mathcal{D}^+$) and discard $\mathcal{D}^-$ totally (RFT) \citep{raft, raft++}. However, this approach prevents the model from any learning on its negative feedback ($\mathcal{D}^-$). We posit that the ability to reflect on one's own failures is not merely desirable, but central to general intelligence, marking a shift from pure imitation to self-reflective learning. Though traditionally viewed as a distinctive strength of RL \citep{chu2025sft,yue2025does}, we ask: \textit{Can self-reflective improvement be similarly achieved within the SL paradigm?}

\subsection{Direct Optimization of Language Models with Negative Answers}
In this section, we discuss how to leverage negative data $\mathcal{D}^-$ to directly optimize $\pi^+_\theta$. Despite seemingly impossible at first thought, we find the target policy $\pi^+$ and the negative policy $\pi^-$ are tightly coupled, making feasible training $\pi^+_\theta$ directly from $\mathcal{D}^-$.

First, we formalize the definition of the negative policy $\pi^-$ similar to Eq.~\ref{eq:positive_distribution}
\begin{equation}
\label{eq:negative_distribution}
    \pi^-(\va|\vq) := \pi(\va|\vq, r{=}0) = \frac{\pi_\text{old}(\va|\vq) [1- p(r{=}1|\vq, \va)] }{\sum_{A}\pi_\text{old}(\va|\vq) [1- p(r{=}1|\vq, \va)]}.
\end{equation}

Combining Eq.~\ref{eq:positive_distribution} and Eq.~\ref{eq:negative_distribution}, we make a key observation that
\begin{equation}
\label{eq:main_relation}
    r_\vq \pi^+(\va|\vq) + \left[1 - r_\vq\right]\pi^-(\va|\vq) = \pi_\text{old}(\va|\vq),
\end{equation}
where $r_\vq:=\sum_{A}\pi_\text{old}(\va|\vq) p(r{=}1|\vq, \va) = p(r{=}1|\vq)$ is the correctness rate of LLM $\pi_\text{old}$ over a question $\vq$. In practice, $r_\vq \approx \texttt{mean}\{r^{1:K}\}$ can be estimated using the $K$ rewards in dataset $\mathcal{D}$. 

\textbf{Implicit negative policy.}~Eq.~\ref{eq:main_relation} reveals a tight coupling between $\pi^+$ and $\pi^-$ (Figure \ref{fig:distribution_split}). Given that we already have $\pi$ as the pretrained LLM and $r_\vq$ is estimable, learning $\pi^-$ from negative data should, in principle, shape the target policy $\pi^+_\theta$, in a manner analogous to SL on positive data.

To realize this idea, we construct an \textit{implicit} negative policy, denoted $\pi^-_\theta$, by re-parameterizing the target policy $\pi_\theta^+$ using the relationship in Eq.~\ref{eq:main_relation}:
\begin{equation*}
    \pi^-_\theta(\va|\vq):= \frac{\pi_\text{old}(\va|\vq) - r_\vq \pi^+_\theta(\va|\vq)}{1 - r_\vq}.
\end{equation*}
Thus, training $\pi^-_\theta$ on negative answers directly leads to optimizing the underlying positive LLM $\pi^+_\theta$ (Figure \ref{fig:implicitnegativepolicy}). We have the following guarantee:
% TODO deal with theorem number
\begin{theorem}[\textbf{Policy Optimization with Negative Answers}]
\label{thrm:negative_loss} Consider the maximum-likelihood objective for training the implicit negative policy $\pi^-_\theta$:
\begin{equation}
\label{eq:negative_loss}
    \max_\theta \E_{p(\vq)\pi^-(\va|\vq)} \left[ \log \pi^-_\theta(\va|\vq) \right]\Leftrightarrow \min_\theta \left[-\;\E_{(\vq, \va) \sim \mathcal{D}^-} \log \frac{\pi_\text{old}(\va|\vq) - r_\vq \pi^+_\theta(\va|\vq)}{1 - r_\vq}\right]
\end{equation}
Assuming unlimited data and model capacity, the optimal solution for solving Eq.~\ref{eq:negative_loss} is
\begin{equation*}
    \forall \vq, \va: \quad\pi^+_{\theta^*}(\va|\vq) = \pi^+(\va|\vq)
\end{equation*}
\end{theorem}

Proof in Appendix \ref{sec:proofs}. Theorem \ref{thrm:negative_loss} demonstrates the feasibility of policy optimization on negative data only. To further utilize positive data, we additionally conduct supervised training on $\mathcal{D}^+$:
% \begin{equation}
% \label{eq:theory_loss}
%     \mathcal{L}^\text{NFT}_{(\va, \vq, r) \sim \mathcal{D}}(\theta) = r \left[ - \log \pi^+_\theta(\va|\vq) \right] + (1-r) \left[-\log \frac{{\pi_\text{old}(\va|\vq)} - r_\vq \pi^+_\theta(\va|\vq)}{1 - r_\vq}\right]
% \end{equation}

\begin{equation}
\label{eq:theory_loss}
    \mathcal{L}^\text{NFT}_{(\va, \vq, r) \sim \mathcal{D}}(\theta) = r \left[ - \log \frac{\pi^+_\theta(\va|\vq)}{\pi_\text{old}(\va|\vq)} \right] + (1-r) \left[-\log \frac{1 - r_\vq \frac{\pi^+_\theta(\va|\vq)}{\pi_\text{old}(\va|\vq)}}{1 - r_\vq}\right]
\end{equation}
Note that we subtract a baseline term $- \log \pi_\text{old}(\va|\vq)$ from the loss. Since this term is unrelated to $\theta$, it does not affect the loss gradient and thus the optimal solution. $\pi_\text{old}(\va|\vq)$ is the old data likelihood before optimizer update. At the start of training, we have $\pi_\theta^+ = \pi$ such that $\mathcal{L}_{(\va, \vq, r)}^\theta = 0$. 

We name our method Negative-aware Fine-Tuning (NFT) as it enables the additional leverage of negative data for policy optimization compared with RFT. 

\textbf{Continuous Reward.} Though we mainly consider binary rewards for math reasoning, NFT is directly applicable to continuous rewards. Following \citet{levine2018reinforcement}, we can define optimality reward $r\in[0,1]$, meaning the answer has a probability of $r$ for being positive, and $1-r$ for being negative. The loss definition in Eq. \ref{eq:theory_loss} and its convergence property stay unchanged (Proof in Appendix \ref{sec:reward}).

\textbf{Memory Efficiency.} NFT is memory-efficient. In practice, we keep only a single model copy in memory. The old policy likelihood $\pi_\text{old}(\va|\vq)$  can be pre-computed during data generation. 
\subsection{Practical Algorithm}
\begin{algorithm}[t]
    \caption{Negative-aware Fine-Tuning (NFT)}
    \begin{algorithmic}[1]
    \label{alg:nft}
        \STATE \textbf{Input:} Language model $\pi$, prompt set $\vq^{1:N}$, verifier $r(\cdot)$.\vspace{1mm}
        \STATE \small\textbf{Def} $\texttt{max\_v}(\vx, \epsilon)$\textbf{:} \hfill\small{\color{gray} \textit{//Straight-through Max Operator}}
        \STATE \small\quad\textbf{Return}  $\texttt{stop\_gradient}[\texttt{max}(\vx, \epsilon) - \vx]+ \vx$\hfill\small{\color{gray} \textit{//Clip Value while Keeping Gradient}}\vspace{1mm}
        \FOR{each iteration}
        \FOR{each sampled prompt $\vq$}
        \STATE Generate $K$ answers $\va^{1:K}$ and verify their correctness $r^{1:K}$\hfill\small{\color{gray} \textit{// Data Collection}} 
        \STATE Calculate correctness rate $\hat{r}_\vq = \texttt{mean}\{r^{1:K}\}$ and token-level likelihood $\{\pi_\text{old}(\va_t|\vq,\va_{<t})^{1:|\va|}\}^{1:K}$
        \STATE $\mathcal{D} \leftarrow \{\vq, \;\hat{r}_\vq,\; \va^{1:K}, r^{1:K}, \pi_t^{1:K}\}$ \textbf{If} $0 <r_\vq<1$ \hfill\small{\color{gray} \textit{// Prompt Filtering}} 
        \ENDFOR
        \STATE Initialize $\pi_\theta^+ \leftarrow \pi$
        \FOR{each mini batch $\{\vq, \va, r, \hat{r}_\vq, \pi_t\}$ in $\mathcal{D}$}
        \STATE $ R_\theta^t(\vq, \va)= \frac{\pi^+_\theta(\va_t|\vq,\va_{<t})}{\pi_\text{old}(\va_t|\vq,\va_{<t})}, \forall t\qquad$  \hfill \small{\color{gray} \textit{// Positive Likelihood Ratio}} 
        \STATE \textbf{If} r = 0: $\qquad\qquad$\vspace{1mm}
        \STATE $\quad R_\theta^t(\vq, \va) = (1 - \hat{r}_\vq \; R_\theta^t(\vq, \va))/(1 - \hat{r}_\vq)\qquad$  \hfill\small{\color{gray} \textit{// Implicit Negative Likelihood Ratio}}  \vspace{1mm}
        \STATE $\quad R_\theta^t(\vq, \va) =\texttt{max\_v}[R_\theta^t(\vq, \va), \epsilon]$   \hfill \small{\color{gray} \textit{// Clip Negative Likelihood Ratio}} \vspace{1mm}
        \STATE $\theta \leftarrow \theta + \lambda \nabla_{\theta} \sum_t \log  R_\theta^t(\vq, \va)$\hspace{3mm} (Eq.~\ref{eq:pratical_loss})\hfill\small{\color{gray} \textit{// Maximum Likelihood Training}} 
        \ENDFOR
        \STATE $\pi \leftarrow \pi_\theta^+$, start the next iteration
        \ENDFOR
    \end{algorithmic}
\end{algorithm}

We introduce several improvements over Eq.~\ref{eq:theory_loss} and propose a practical objective of NFT:
\begin{equation}
\label{eq:pratical_loss}
\mathcal{L}^\text{NFT}_{\mathcal{D}}(\theta) = -\sum_{\vq,\va,r} \omega(\vq)\sum_t \left[r \log R_\theta^t(\vq, \va) + (1-r) \log \text{max\_v}( \frac{1 - \hat{r}_\vq \; R_\theta^t(\vq, \va)}{1 - \hat{r}_\vq} , \epsilon)\right]
\end{equation}
\begin{equation}
\label{eq:R_def} % Optional label for the definition
\text{where } \quad R_\theta^t(\vq, \va)= \frac{\pi^+_\theta(\va_t|\vq,\va_{<t})}{\pi_\text{old}(\va_t|\vq,\va_{<t})}, \quad\text{and}\quad\ \hat{r}_\vq = \frac{1}{K}\sum_{\va|\vq} r(\vq,\va).\nonumber
\end{equation}
Pseudo code is in Algorithm \ref{alg:nft}. Below, we explain our key design choices.

\textbf{Token-level loss.} Eq.~\ref{eq:theory_loss} essentially deals with sequence data, where answer likelihood $\pi(\va|\vq)=\prod_t \pi(\va_t|\vq, \va_{<t})$ is heavily correlated with answer length. This introduces high variance in gradient estimation and causes numerical instabilities during training. Following \citet{ppo, dapo}, we view each token decision as an individual unit and sum up their loss in Eq.~\ref{eq:pratical_loss}. 

\textbf{Clipping negative likelihood ratio.} The negative loss in Eq.~\ref{eq:pratical_loss} involves a logarithm whose argument must remain positive, imposing $(1 - \hat{r}_\vq R_\theta^t) / (1-\hat{r}_\vq) >0$. When $R_\theta^t$ is unoptimized, this requirement may not be satisfied, potentially leading to training collapse. We therefore enforce a minimum value of $\epsilon>0$ for the negative likelihood ratio. To preserve gradient flow after clipping, we further apply straight-through gradient \citep{bengio2013estimating}. Details are in Algorithm \ref{alg:nft}.

\textbf{Prompt weighting.} To focus training on more informative instances, we weight each prompt $\vq$ by $\omega(\vq)$, assigning higher importance to hard questions with low $r_\vq$. An ablation study is posted in Sec. \ref{sec:exp_ablation}. This design also helps align NFT with RL algorithms like GRPO (Sec. \ref{sec:connection}).

% \textbf{Continuous reward.} TBD TBD TBD TBD TBD TBD TBD TBD TBD TBD 
% \textbf{Mention DiffusionNFT.} TBD TBD TBD TBD TBD TBD TBD TBD TBD TBD 
% % 
\section{Understanding the Gap between NFT and GRPO}
\label{sec:connection}
Despite originating from entirely different theoretical foundations, NFT and GRPO exhibit significant similarities. Notably, we find \textbf{GRPO and NFT are equivalent in on-policy training}. To understand this, we calculate and compare their loss gradients:

\begin{proposition}[\textbf{Algorithm Gradient Comparision}] \label{thrm:gradient} Suppose there are $\hat{r}_\vq K$ positive answers and $(1-\hat{r}_\vq)K$ negative ones for a given question $\vq$

\textbf{(a) GRPO Gradient:}\; Consider only binary reward $\{0,1\}$ in Eq.~\ref{eq:GRPO_pratical_loss}, GRPO loss gradient satisfies
\begin{equation*}
% \label{eq:gradient_GRPO}
    \nabla_\theta \mathcal{L}^\text{GRPO}_{\mathcal{D}}(\theta) =  -\sum\Big\{ r  \textcolor{red}{A^+_\vq} \cdot \textcolor[HTML]{2ca02c}{\mathcal{I}\left[R_\theta^t(\vq,\va) < 1 + \epsilon'\right]} + (1-r) \textcolor{red}{A^-_\vq} \cdot \textcolor[HTML]{2ca02c}{\mathcal{I}\big[R_\theta^t(\vq,\va) > 1 - \epsilon'\big]}\Big\}  \nabla_\theta R_\theta^t(\vq,\va),
\end{equation*}
where $A^+_\vq = \sqrt{\frac{1 - \hat{r}_\vq}{\hat{r}_\vq}}$ and $A^-_\vq = -\sqrt{\frac{\hat{r}_\vq}{1 - \hat{r}_\vq}}$ are respectively normalized advantages for answers.

\textbf{(b) NFT Gradient:} \; Let $\omega(\vq) = \sqrt{(1-\hat{r}_\vq) / \hat{r}_\vq}$, NFT loss gradient satisfies
\begin{equation*}
% \label{eq:gradient_NFT}
    \nabla_\theta \mathcal{L}^\text{NFT}_{\mathcal{D}}(\theta) =  -\sum\Big\{ r  \textcolor{red}{A^+_\vq} \cdot \textcolor[HTML]{1a60a2}{\frac{1}{R_\theta^t(\vq,\va)}} + (1-r) \textcolor{red}{A^-_\vq} \cdot \textcolor[HTML]{1a60a2}{\max\big[ \frac{1 - \hat{r}_\vq \; R_\theta^t(\vq, \va)}{1 - \hat{r}_\vq} , \epsilon \big]^{-1}} \Big\}  \nabla_\theta R_\theta^t(\vq,\va).
\end{equation*}
\end{proposition}
\begin{wrapfigure}{r}{0.5\textwidth}
\vspace{-2mm}
\centering
\includegraphics[width=0.48\linewidth]{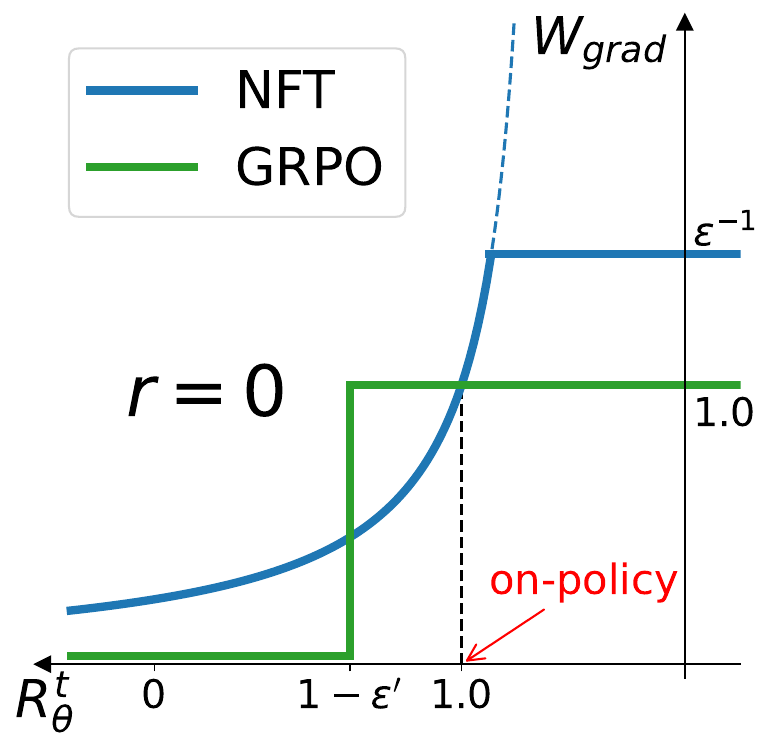}
\includegraphics[width=0.48\linewidth]{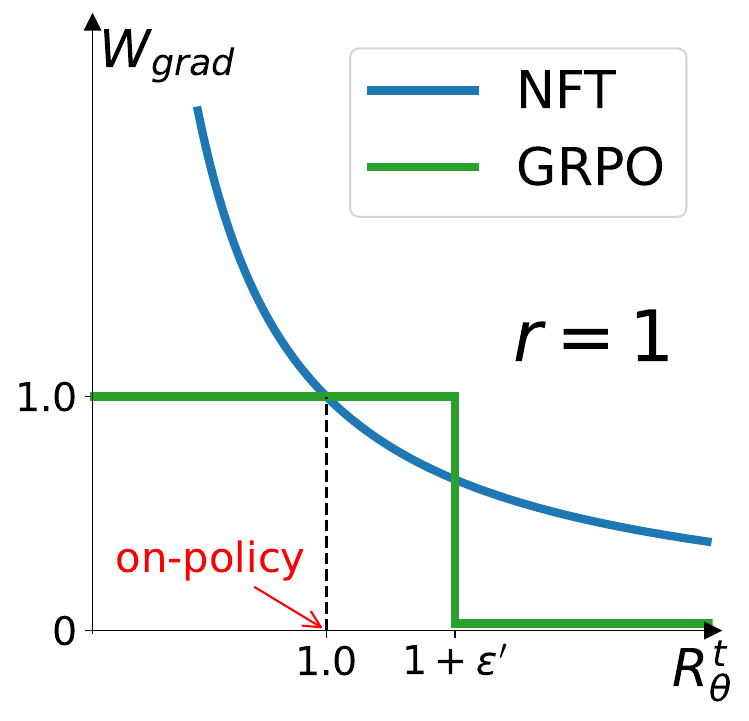}
\caption{Gradient weight for NFT and GRPO.}  % 可加可不加，但 wrapfigure 推荐加
\vspace{-6mm}
\label{fig:gradient_weight}
\end{wrapfigure}

Proofs are provided in Appendix \ref{sec:proofs}. Comparing $\nabla_\theta \mathcal{L}^\text{GRPO}_{\mathcal{D}}(\theta)$ and $\nabla_\theta \mathcal{L}^\text{NFT}_{\mathcal{D}}(\theta)$, the only difference between GRPO and NFT is their strategy for clipping model gradients when training data becomes \textit{off-policy} (Figure \ref{fig:gradient_weight}). GRPO simply zeros out the gradient when the learned policy $\pi_\theta$ shifts far away from the old policy $\pi$, while NFT takes a ``softer'' decay schedule. 

Surprisingly, we find GRPO and NFT to be equivalent when training is totally on-policy, despite their distinctively different derivations:

% \vspace{1mm}
\begin{proposition}[\textbf{On-policy Gradient Equivalence}] Following the set up of Proposition \ref{thrm:gradient} and let $\epsilon \leq 1$, GRPO and NFT loss gradient are equivalent in on-policy training:
\begin{equation*}
    R_\theta^t(\vq,\va) = 1 \quad \Longrightarrow \quad \nabla_\theta \mathcal{L}^\text{NFT}_{\mathcal{D}}(\theta) = \nabla_\theta \mathcal{L}^\text{GRPO}_{\mathcal{D}}(\theta)
\end{equation*}
\end{proposition}
\textbf{Implicit group normalization.} Proposition \ref{thrm:gradient} shows the ``normalized advantage'' term is implicitly present within NFT's loss function. This partially justifies the Group Normalization design choice for GRPO, which was initially introduced only as an empirical technique \citep{grpo}. In Appendix \ref{sec:proofs}, we further demonstrate that by adjusting $\omega(\vq)=1-\hat{r}_\vq$, NFT also aligns with Dr. GRPO \citep{drgrpo}. Our findings suggest a strong connection between RL and SL frameworks.
\section{Experiments}
\begin{figure}[t]
\vspace{3mm}
\centering
\includegraphics[width = .95\linewidth]{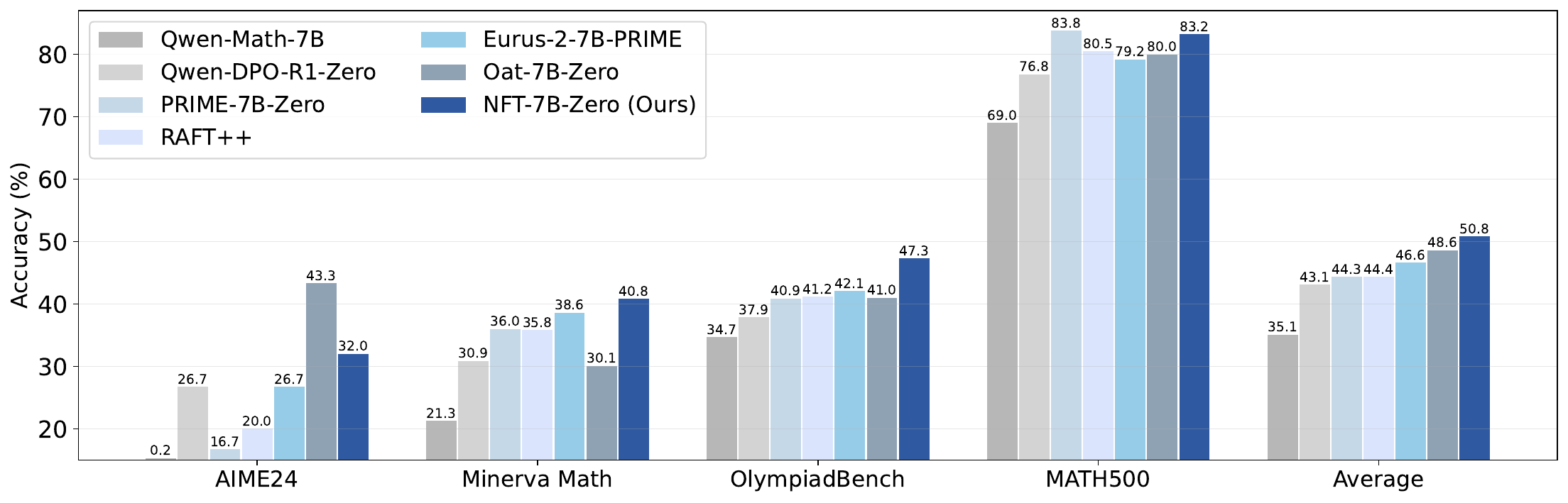}
\vspace{-2mm}
% }
\caption{Comparison of the released NFT-7B with other zero-style math models of Qwen series.}
\label{fig:model_relase}
\end{figure}
\begin{table}[t]
\caption{\label{tab:benchmark_results}NFT performs competitively compared with other algorithms. We report \texttt{avg@32} for AIME24, AIME25, and AMC23 and \texttt{avg@1} for others. Numbers within 1 \% of the max are bolded.}
\centering
\resizebox{\textwidth}{!}{%
\begin{tabular}{lccccccc}
\toprule
\textbf{Model} & \textbf{AIME24} & \textbf{MATH500} & \textbf{AIME25} & \textbf{AMC23} & \textbf{Olympiad} & \textbf{Minerva} & \textbf{Average} \\
\midrule
Qwen2.5-Math-7B & 13.3 & 69.0 & 5.5 & 45.8 & 34.7 & 21.3 & 31.6 \\
\multicolumn{8}{l}{\textcolor{gray}{\small \textit{Preference fine-tuning}}} \\
\qquad+ DPO & 29.8   & 79.8   & 13.8   & 83.2   & 48.0  & 39.0   &48.9  \\
\multicolumn{8}{l}{\textcolor{gray}{\small \textit{Reinforcement fine-tuning}}} \\
\qquad+ GRPO         & 30.2 & 80.4 & 17.1 & 79.5 & \bf{51.8} & 38.2 & 49.5 \\
\qquad+ Dr. GRPO      & 31.8 & \bf{83.4} & 15.7 & 80.2 & 49.6 & 38.2 & 49.8 \\
\qquad+ DAPO         & \bf{33.1} & 81.6 & \bf{18.7} & 85.0 & 49.9 & 39.3 & \bf{51.2} \\
\multicolumn{8}{l}{\textcolor{gray}{\small \textit{Supervised fine-tuning}}} \\
\qquad+ RFT          & \bf{33.7} & 79.8 & 13.4 & 79.7 & 44.3 & 38.6 & 48.3 \\
\rowcolor[HTML]{E8F4FD}
\qquad+ \bf{NFT}   & 32.0 & \bf{83.2} & \bf{18.3} & \bf{88.5} & 47.3 & \bf{40.8} & \bf{51.7} \\
\midrule
Qwen2.5-32B & 4.1 & 68.6 & 1.0 & 45.0 &  31.1 &  27.9 & 29.6 \\
\qquad+ DAPO         & \bf{44.1} & \bf{89.2} & \bf{33.4} & 90.9 & \bf{54.1} & 47.5 & \bf{59.9} \\
\qquad+ RFT          & 29.9 & 86.2 & 19.1 & 92.4 & 45.3 & 44.1 & 52.8 \\
\rowcolor[HTML]{E8F4FD}
\qquad+ \bf{NFT}   & 37.8 & \bf{88.4} & 31.5 & \bf{93.8} & \bf{55.0} & \bf{48.9} & \bf{59.2} \\
\bottomrule
\vspace{-6mm}
\end{tabular}
}
\end{table}
We seek to answer the following questions through our experiments.
\begin{enumerate}[leftmargin=*,align=left,noitemsep,nolistsep]
\item How does NFT perform in comparison with existing RL algorithms such as GRPO? (Sec. \ref{sec:exp_comparison})
\item How does negative data contribute to NFT's performance gain? (Sec. \ref{sec:exp_negative})
\item Which empirical design choices are important to the effectiveness of NFT? (Sec. \ref{sec:exp_ablation})
\end{enumerate}
\subsection{Experiment Setups}
\textbf{Training.} We perform online fine-tuning on \texttt{Qwen2.5-Math-7B} and \texttt{Qwen2.5-32B} \citep{qwen2.5math} to enhance their math abilities without relying on external teachers. The training dataset is the publicly available \texttt{DAPO-Math-17k} \citep{dapo}, which consists solely of math questions paired with ground-truth answers in integer form. During training, all models are fine-tuned for approximately 5,000 gradient steps with a batch size of 512. Generation temperature is 1.0.

\textbf{Evaluation.} We evaluate models on six validation benchmarks and report their average accuracy: \texttt{AIME 2024}, \texttt{AIME 2025}, \texttt{AMC 2023} \citep{aime}, \texttt{MATH500} \citep{math500}, \texttt{OlympiadBench} \citep{Olympiadbench}, and \texttt{Minerva Math} \citep{minerva}. 

\textbf{Baseline methods.} We compare against a set of online fine-tuning algorithms, including Iterative DPO \citep{DPO, xiong2023iterative, guo2024direct}, GRPO \citep{grpo}, Dr. GRPO \citep{drgrpo}, DAPO \citep{dapo}, and RFT \citep{raft, rftcaling}. 

% DAPO and RFT are highlighted below. Details for other algorithms are in Appendix \ref{sec:exp_details}.

\vspace{-.6em}
\subsection{NFT Performance Evaluation}
\label{sec:exp_comparison}
\vspace{-0.3em}
\textbf{Model comparison.} By applying NFT to Qwen2.5-Math-7B, we release \texttt{NFT-7B-Zero} (Figure~\ref{fig:model_relase}). \texttt{NFT-7B-Zero} achieves competitive performance on all benchmarks compared to other zero-style 7B math models \citep{prime, drgrpo,raft++,xiong2023iterative}. This provides strong empirical evidence for the effectiveness of the NFT algorithm and demonstrates that SL alone can enable effective verification-driven training in math tasks. 

\textbf{Algorithm comparison.} To isolate the contribution of the algorithm itself, we benchmarked various online algorithms using identical training data, infrastructure, and general hyperparameters (Table \ref{tab:benchmark_results}). Results show that NFT matches or even surpasses state-of-the-art methods such as DAPO. Figure~\ref{fig:7bplot} and \ref{fig:32bplot} present training curves across multiple runs. NFT exhibits convergence speed and final performance on par with DAPO, further supporting its stability.
\begin{figure}[t]
\vspace{3mm}
\centering
\includegraphics[width = .48\linewidth]{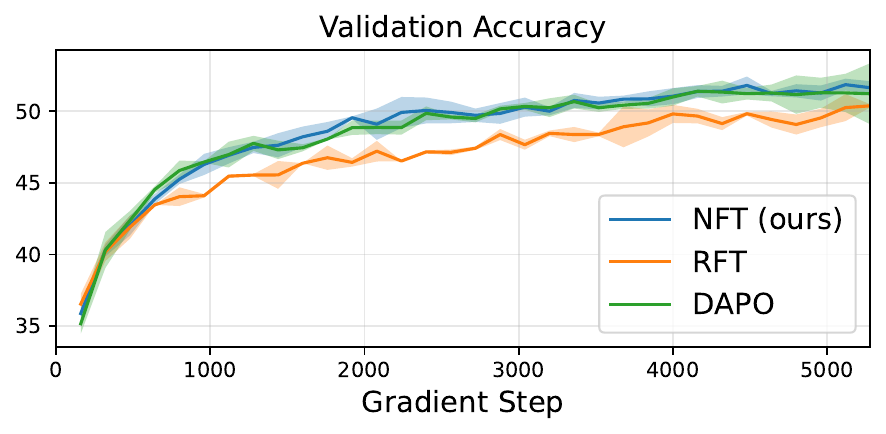}
\includegraphics[width = .48\linewidth]{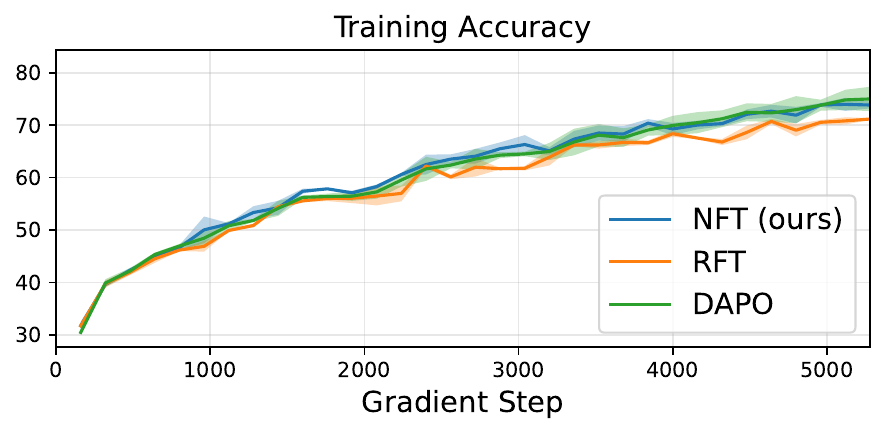}
\vspace{1mm}
\includegraphics[width = .3\linewidth]{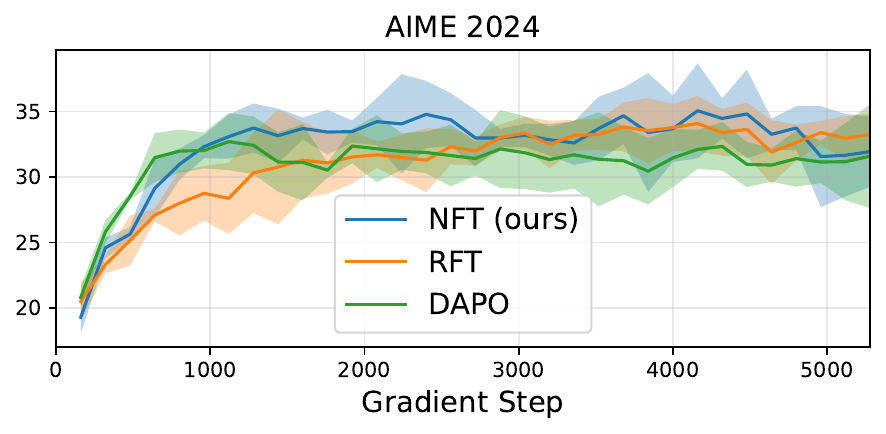}
\includegraphics[width = .3\linewidth]{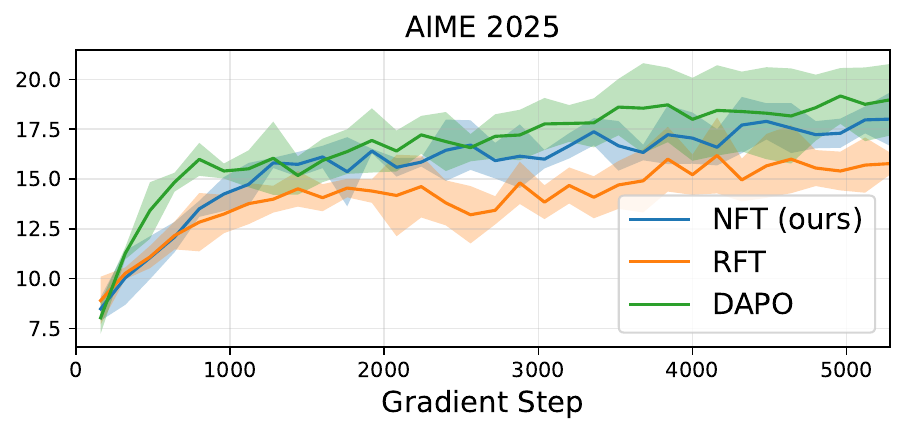}
\includegraphics[width = .3\linewidth]{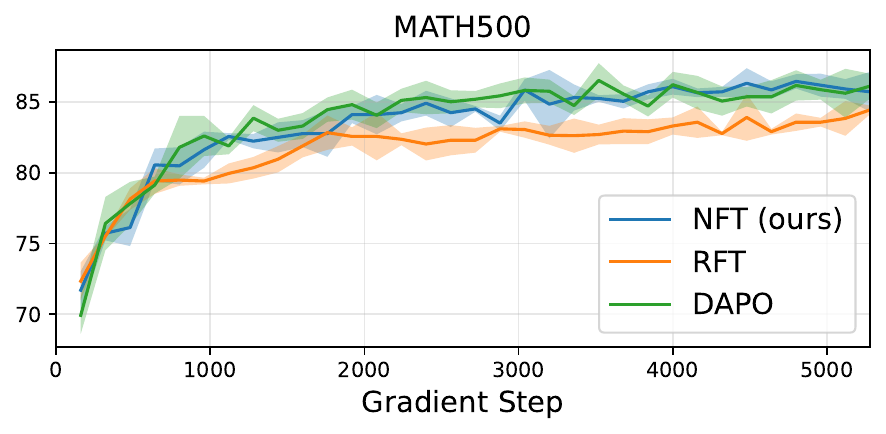}\\
\vspace{-1mm}
\includegraphics[width = .3\linewidth]{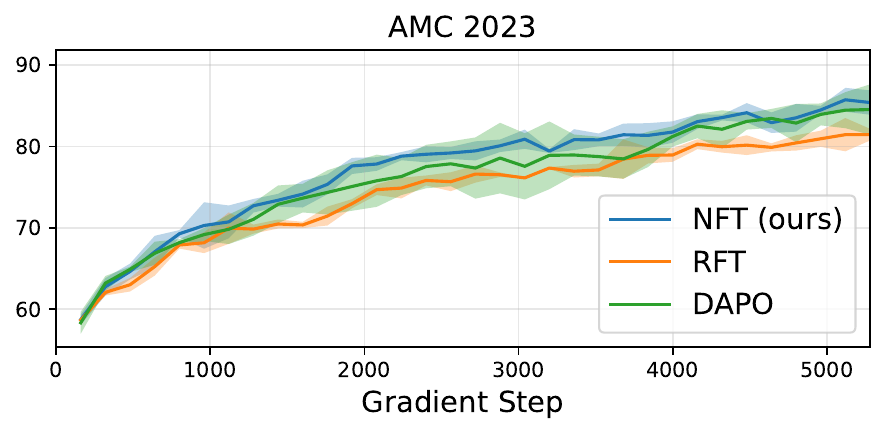}
\includegraphics[width = .3\linewidth]{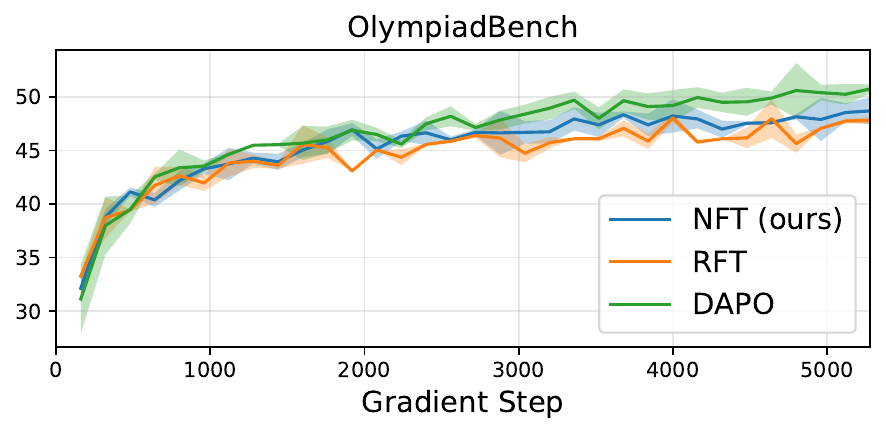}
\includegraphics[width = .3\linewidth]{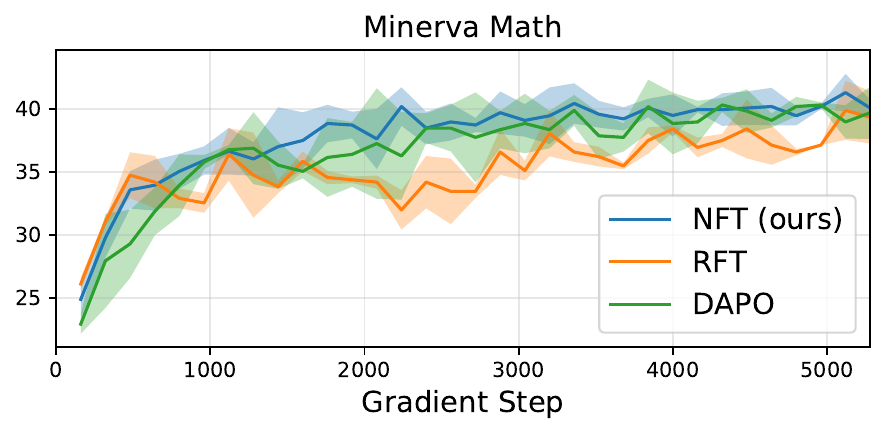}
% }
\caption{\label{fig:7bplot}Training and validation accuracy curves for 7B experiments. We conducted 3-4 random and independent experiments for each algorithm and report their $\text{mean} \pm \text{std}$ results.}
\end{figure}
\begin{figure}
    \centering
    \vspace{-3mm}
    \includegraphics[width=0.55\linewidth]{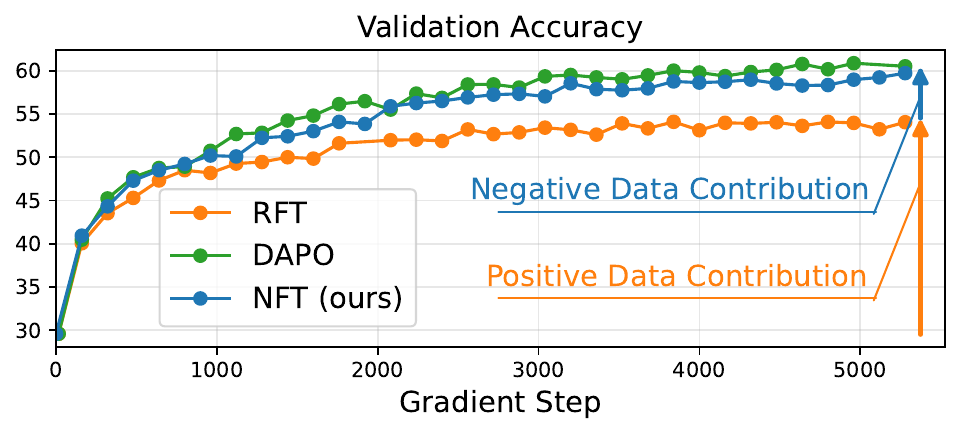}
    \vspace{-2mm}
    \caption{\label{fig:32bplot}Average accuracy across 6 benchmarks for 32B experiments. More curves in Appendix \ref{sec:exp_add_results}.}
    \vspace{-6mm}
\end{figure}
\subsection{Benefits of Negative Data}
\label{sec:exp_negative}
\begin{wrapfigure}{r}{0.5\textwidth}
\vspace{-2mm}
\centering
\includegraphics[width=0.49\linewidth]{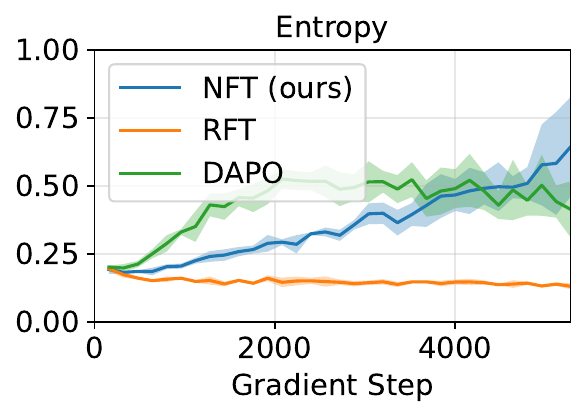}
% {\scriptsize Qwen-7B experiments}
\includegraphics[width=0.49\linewidth]{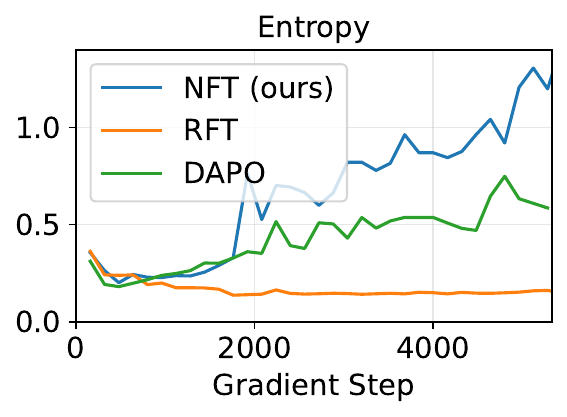}
\vspace{-8mm}
\caption{Entropy curves for 7B and 32B runs.} 
\label{fig:entropy}
\end{wrapfigure}

\textbf{Negative feedback enhances performance and exploration.} Table \ref{tab:benchmark_results} shows that NFT consistently outperforms RFT by a clear margin, highlighting the benefit of incorporating negative feedback during training. 
Notably, we observe a clear divergence in training dynamics between RFT and NFT. Across both 7B and 32B settings, RFT tends to reduce entropy over time, whereas NFT and RL methods like DAPO encourage increasing entropy (Figure \ref{fig:entropy}). This suggests more exploration \citep{dapo}, potentially hindering why NFT outperforms RFT.

\textbf{Negative feedback becomes increasingly important in larger models.} The performance gap between RFT and NFT widens faster over training in 32B experiments compared with the trend in 7B (Figure \ref{fig:32bplot}). Similarly, \citet{deepseekr1} also notes RL offers greater benefits over SFT in larger models. A potential explanation could be the increasing importance of negative data: Larger models already memorize well enough, so the ability to reflect on mistakes becomes a new bottleneck.

\textbf{RFT remains a strong baseline.} Although surpassed by numerous algorithms, RFT still deserves attention due to its extreme simplicity. In 32B settings (Figure \ref{fig:32bplot}), learning from positive data (RFT) contributes to 80\% of the total gain achieved by our best-performing model, while negative data only accounts for the remaining 20\%. These findings echo recent studies \citep{yue2025does, raft++, noaha, zhao2025echo, onesample}, which suggest RL primarily amplifies existing capabilities in large models rather than fostering new skills. How to exploit negative feedback remains an open challenge with heavy potential.

\subsection{Ingredients Behind NFT’s Effectiveness}
\label{sec:exp_ablation}
\vspace{-.3em}
We discuss two empirical design choices that notably help NFT achieve strong performance.

\begin{figure}[t]
\centering
\begin{minipage}{0.525\textwidth}
\centering
\vspace{1mm}
\includegraphics[width=0.48\linewidth]{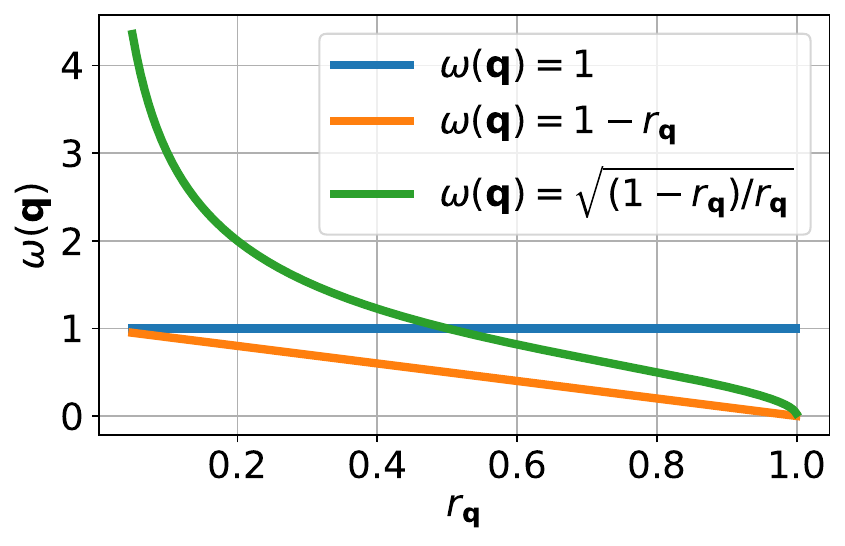}
\includegraphics[width=0.48\linewidth]{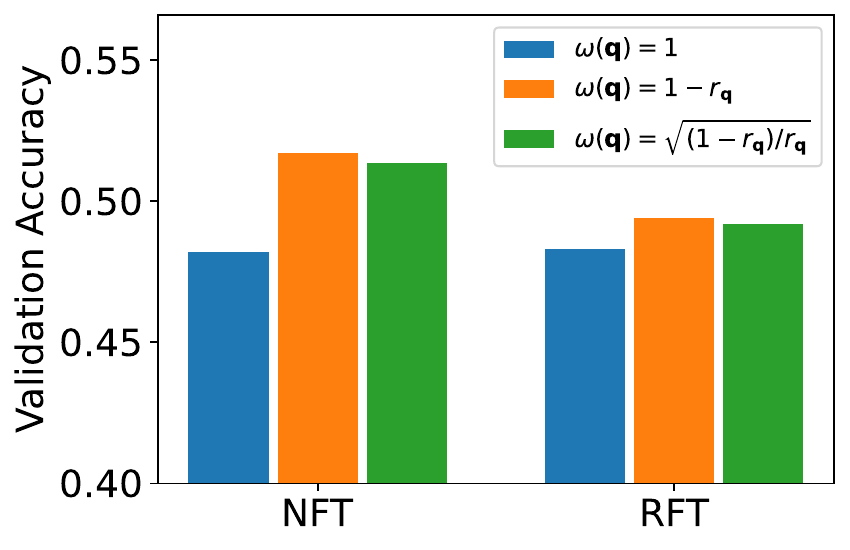}
\vspace{1mm}
\captionof{figure}{Effect of prompt weighting.}
\label{fig:promptweight}
\end{minipage}
\hfill
\begin{minipage}{0.465\textwidth}
\centering
\includegraphics[width=0.4\linewidth]{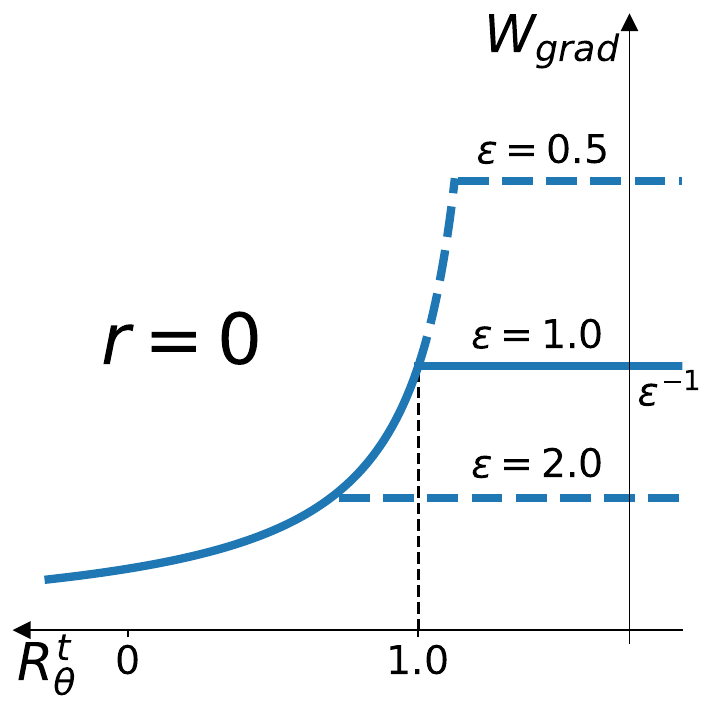}
\includegraphics[width=0.58\linewidth]{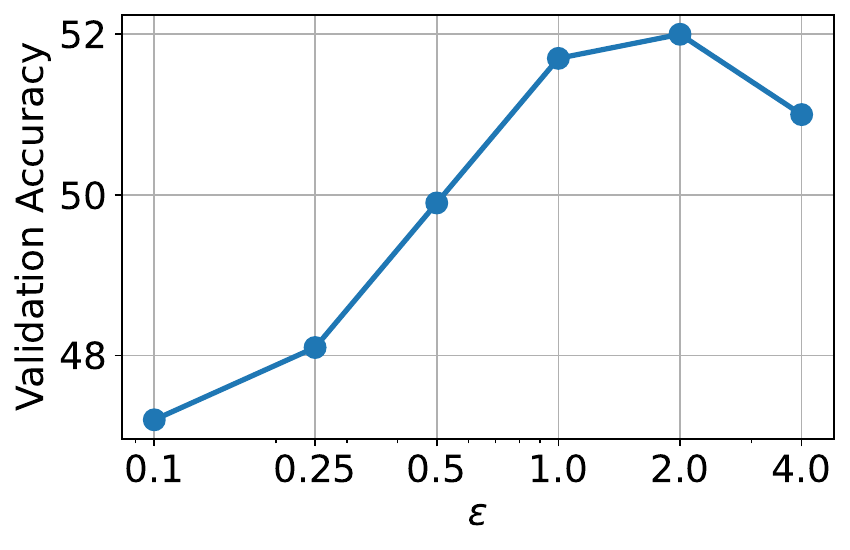}
\captionof{figure}{Effect of negative ratio clip value $\epsilon$.}
\label{fig:epsilon}
\end{minipage}
\end{figure}

% \vspace{2mm}
% \begin{wrapfigure}{r}{0.48\textwidth}
% \vspace{6mm}
% \centering
% \includegraphics[width=0.48\linewidth]{pics/Weight_visual.pdf}
% % {\scriptsize Qwen-7B experiments}
% \includegraphics[width=0.48\linewidth]{pics/Weight_ablate.pdf}
% % \vspace{-2mm}
% \caption{\label{fig:promptweight}Effect of prompt weighting.} 
% \end{wrapfigure}
\textbf{Prioritize harder questions.} We find that assigning higher weights to difficult questions with a low answer correctness rate $ \hat{r}_\vq$ can enhance model performance. We achieve this mainly by selecting $\omega(\vq)$ in Eq.~\ref{eq:pratical_loss} from 3 choices: (1) $\omega(\vq)=1$. (2) $\omega(\vq)=1-r_\vq$, which aligns NFT with Dr. GRPO in on-policy training (Sec. \ref{sec:connection}). (3) $\omega(\vq) = \sqrt{(1-r_\vq) / r_\vq}$, which aligns NFT with GRPO. Figure \ref{fig:promptweight} visualizes different $\omega(\vq)$, and their effect for NFT and RFT. We find choices (2) and (3) perform similarly, both outperforming constant weighting choice (1). 

% \begin{wrapfigure}{r}{0.48\textwidth}
% \vspace{-4mm}
% \centering
% \includegraphics[width=0.4\linewidth]{pics/Weight_n_epsilon.pdf}
% \includegraphics[width=0.58\linewidth]{pics/epsilon_result.pdf}
% \vspace{-2mm}
% \caption{\label{fig:epsilon}Effect of negative ratio clip value $\epsilon$.}
% \vspace{-1em}
% \end{wrapfigure}
\textbf{Avoid overpenalizing mistakes.} 
The clip value $\epsilon$ of NFT (Eq.~\ref{eq:pratical_loss}) sets an upper bound on the penalty weight when the likelihood ratio $R_\theta^t$ for negative answers increases. When $\epsilon$ is small (e.g., near zero), the algorithm empirically assigns high penalties to rising likelihoods of incorrect answers (Figure \ref{fig:epsilon}). However, our experiments show that overly aggressive penalization with $\epsilon\rightarrow 0$ degrades overall performance. We thus adopt a default setting of $\epsilon = 1.0$.
\section{Related Works}
% \vspace{-.3em}
Reinforcement Learning with Verifiable Rewards (RLVR) has advanced the frontier of LLM reasoning \citep{deepseekr1,jaech2024openai, team2025kimi, acereason2025}. Compared with previous RL practices that rely on strong reward models \citep{wang2023math, yuan2024free, zhang2024generative} to simulate human feedback \citep{ouyang2022training, preferecerl, prime}, RLVR turns to a ground truth verifier for providing reliable binary supervision \citep{lambert2024t, grpo}. Moreover, unlike preference-based learning algorithms such as DPO \citep{DPO, cai2023ulma, IPO, kto, wang2023beyond, NCA, ORPO,xu2023some}, RLVR does not require paired preference data, rendering it more flexible and memory-efficient. Despite the demonstrated effectiveness of RL algorithms in verification-driven training \citep{li2023remax, ahmadian2024back, hu2025reinforce++, dapo, chu2025gpg,yuan2025trajectorybellmanresidualminimization, yan2025learningreasonoffpolicyguidance}, recent studies suggest that supervised learning (SL) may also suffice for achieving verification-driven training in LLMs \citep{raft}. Our method further addresses SL's inability to incorporate negative feedback \citep{hua2024intuitive}, bridging both the theoretical and the performance gap between the two fields.

A key design of NFT involves implicit policy modeling for direct policy optimization. This design, emphasizing direct optimization via implicitly defined models, shares conceptual similarities with some existing approaches. In preference-based training, DPO \citep{DPO} introduces an implicit reward model parameterized by the policy network to allow optimizing policies directly. Recent visual modeling efforts also leverage implicit conditional or residual models parameterized by generation networks to avoid guided sampling \citep{cca, GFT, ddo}.
\section{Conclusion}% 
In this work, we introduce Negative-aware Fine-Tuning (NFT), a supervised approach that enables LLMs to learn from their own negative generations. In online training, NFT substantially improves upon supervised learning baselines through the additional leverage of negative feedback, achieving performance comparable to leading RL algorithms like GRPO. Notably, we unveiled a theoretical equivalence between NFT and GRPO under strict-on-policy conditions. These findings bridge the conceptual and practical gap between SL and RL paradigms in verification-driven training.

\subsubsection*{The Use of Large Language Models (LLMs)}
We used large language models (LLMs) solely as a writing assistant for language polishing and improving clarity of presentation. The LLMs were not involved in research ideation, methodological design, experimental execution, or result analysis. All scientific contributions and substantive writing were carried out by the authors.

\subsubsection*{Acknowledgments}

This project is supported by Fundamental and Interdisciplinary Disciplines Breakthrough Plan of the Ministry of Education of China (No. JYB2025XDXM101), NSF of China Projects (Nos. 62550004, U25B6003, 92370124, 92248303); Beijing Natural Science Foundation L247011; the High Performance Computing Center, Tsinghua University. J.Z was also supported by the XPlorer Prize.

We especially thank Lifan Yuan for collaboration. We thank Wei Xiong, Zekun Hao, Yuxuan Tong, Chang Zhou, and Jiashu Xu for the insightful discussion. 

% \section*{Acknowledgments}
% We thank Wei Xiong, Zekun Hao, Yuxuan Tong, Lifan Yuan, Jiashu Xu, and Chang Zhou for the insightful discussion.

\bibliography{iclr2026_conference}
\bibliographystyle{iclr2026_conference}

%%%%%%%%%%%%%%%%%%%%%%%%%%%%%%%%%%%%%%%%%%%%%%%%%%%%%%%%%%%%
\clearpage
\newpage

\appendix

\section{Proof of Theorems}
\label{sec:proofs}
% \begin{equation}
%     r_\vq =  \E_{\pi(\va|\vq) } \mathcal{I}(r=1|\vq,\va)
% \end{equation}
\begin{theorem}[\textbf{Policy Optimization with Negative Answers}] Consider the maximum-likelihood objective for training the implicit negative policy $\pi^-_\theta$:
\begin{equation*}
    \max_\theta \E_{p(\vq)\pi^-(\va|\vq)} \left[ \log \pi^-_\theta(\va|\vq) \right]\Leftrightarrow \min_\theta \left[-\;\E_{(\vq, \va) \sim \mathcal{D}^-} \log \frac{\pi_\text{old}(\va|\vq) - r_\vq \pi^+_\theta(\va|\vq)}{1 - r_\vq}\right]
\end{equation*}
Assuming unlimited data and model capacity, the optimal solution for solving Eq.~\ref{eq:negative_loss} is
\begin{equation*}
    \forall \vq, \va: \quad\pi^+_\theta(\va|\vq) = \pi^+(\va|\vq)
\end{equation*}
\end{theorem}
\begin{proof}
The proof is quite straightforward. First, we show that maximum-likelihood training leads to the optimal solution $\pi^-_{\theta^*}(\va|\vq) = \pi^-(\va|\vq)$.
\begin{align*}
    &\max_\theta \E_{p(\vq)\pi^-(\va|\vq)} \left[ \log \pi^-_\theta(\va|\vq) \right] \\
    \Leftrightarrow &\max_\theta \E_{p(\vq)\pi^-(\va|\vq)}\left[ \log \pi^-_\theta(\va|\vq) - \log \pi^-(\va|\vq)\right]\\
    \Leftrightarrow &\min_\theta \E_{p(\vq)}\KL\left[\pi^-(\va|\vq)\|\pi^-_\theta(\va|\vq)\right] 
\end{align*}
Since $\KL\left[\pi^-(\va|\vq)\|\pi^-_\theta(\va|\vq)\right] \geq 0$. The equality holds if and only if $\forall \vq: \pi^-_\theta = \pi^-$. We thus have
\begin{equation}
\label{eq:negative_equal}
    \forall \vq, \va: \quad\pi^-_{\theta^*}(\va|\vq) = \pi^-(\va|\vq).
\end{equation}
Next, we prove $\pi^+_{\theta^*} = \pi^+$. 

Note that that during training, $\pi^-_{\theta}$ is only an \textit{implicit} policy defined by $\pi^+_{\theta}$ through 
\begin{equation*}
    \pi^-_\theta(\va|\vq):= \frac{\pi_\text{old}(\va|\vq) - r_\vq \pi^+_\theta(\va|\vq)}{1 - r_\vq}.
\end{equation*}
On the other hand, the negative data distribution $\pi^-$ has the same relationship with $\pi^+$ by Eq. \ref{eq:main_relation}.
\begin{equation*}
    \pi^-(\va|\vq):= \frac{\pi_\text{old}(\va|\vq) - r_\vq \pi^+(\va|\vq)}{1 - r_\vq}.
\end{equation*}
We have ensured $0<r_\vq<1$ during training, combining these observations and Eq. \ref{eq:negative_equal}, we have 
\begin{equation*}
    \forall \vq, \va: \quad\pi^+_{\theta^*}(\va|\vq) = \pi^+(\va|\vq)
\end{equation*}
\end{proof}

\begin{proposition}[\textbf{Algorithm Gradient Comparision}] Suppose there are $\hat{r}_\vq K$ positive answers and $(1-\hat{r}_\vq)K$ negative ones for a given question $\vq$

\textbf{(a) GRPO Gradient:}\; Consider only binary reward $\{0,1\}$ in Eq.~\ref{eq:GRPO_pratical_loss}, GRPO loss gradient satisfies
\begin{equation*}
    \nabla_\theta \mathcal{L}^\text{GRPO}_{\mathcal{D}}(\theta) =  -\sum\Big\{ r  \textcolor{red}{A^+_\vq} \cdot \textcolor[HTML]{2ca02c}{\mathcal{I}\left[R_\theta^t(\vq,\va) < 1 + \epsilon'\right]} + (1-r) \textcolor{red}{A^-_\vq} \cdot \textcolor[HTML]{2ca02c}{\mathcal{I}\big[R_\theta^t(\vq,\va) > 1 - \epsilon'\big]}\Big\}  \nabla_\theta R_\theta^t(\vq,\va),
\end{equation*}
where $A^+_\vq = \sqrt{\frac{1 - \hat{r}_\vq}{\hat{r}_\vq}}$ and $A^-_\vq = -\sqrt{\frac{\hat{r}_\vq}{1 - \hat{r}_\vq}}$ are respectively normalized advantages for answers.

\textbf{(b) NFT Gradient:} \; Let $\omega(\vq) = \sqrt{(1-\hat{r}_\vq) / \hat{r}_\vq}$, NFT loss gradient satisfies
\begin{equation*}
    \nabla_\theta \mathcal{L}^\text{NFT}_{\mathcal{D}}(\theta) =  -\sum\Big\{ r  \textcolor{red}{A^+_\vq} \cdot \textcolor[HTML]{1a60a2}{\frac{1}{R_\theta^t(\vq,\va)}} + (1-r) \textcolor{red}{A^-_\vq} \cdot \textcolor[HTML]{1a60a2}{\max\big[ \frac{1 - \hat{r}_\vq \; R_\theta^t(\vq, \va)}{1 - \hat{r}_\vq} , \epsilon \big]^{-1}} \Big\}  \nabla_\theta R_\theta^t(\vq,\va).
\end{equation*}
\end{proposition}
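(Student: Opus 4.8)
The plan is to compute both loss gradients directly by differentiating the loss definitions term by term, so the whole argument is a bookkeeping exercise once the right substitutions are made. First I would handle the NFT gradient, since Eq.~\ref{eq:pratical_loss} is the cleaner object. Under the assumption that there are $\hat r_\vq K$ positive and $(1-\hat r_\vq) K$ negative answers, the loss splits into a positive part $-\omega(\vq)\sum_t r\log R_\theta^t$ and a negative part $-\omega(\vq)\sum_t (1-r)\log\mathrm{max\_v}\big(\tfrac{1-\hat r_\vq R_\theta^t}{1-\hat r_\vq},\epsilon\big)$. Differentiating the positive term gives $-\omega(\vq)\,\tfrac{1}{R_\theta^t}\nabla_\theta R_\theta^t$; the straight-through nature of $\mathrm{max\_v}$ means that inside the $\log$ the derivative is taken as if the argument were $\tfrac{1-\hat r_\vq R_\theta^t}{1-\hat r_\vq}$ but the value is clipped to $\epsilon$, so differentiating the negative term yields $-\omega(\vq)(1-r)\big(\max[\tfrac{1-\hat r_\vq R_\theta^t}{1-\hat r_\vq},\epsilon]\big)^{-1}\cdot\tfrac{-\hat r_\vq}{1-\hat r_\vq}\nabla_\theta R_\theta^t$. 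Substituting $\omega(\vq)=\sqrt{(1-\hat r_\vq)/\hat r_\vq}$ turns the positive prefactor into $\sqrt{(1-\hat r_\vq)/\hat r_\vq}=A^+_\vq$ and the negative prefactor into $\sqrt{(1-\hat r_\vq)/\hat r_\vq}\cdot\tfrac{\hat r_\vq}{1-\hat r_\vq}=\sqrt{\hat r_\vq/(1-\hat r_\vq)}=-A^-_\vq$, so the extra minus sign in $A^-_\vq$ cancels the one from the chain rule and Eq.~\ref{eq:gradient_NFT} falls out.

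Next I would do the GRPO gradient. Starting from Eq.~\ref{eq:GRPO_pratical_loss}, the advantage $\hat A_{\vq,\va}$ from Eq.~\ref{eq:normalized_advantage} with binary rewards and $\hat r_\vq K$ ones is $\tfrac{1-\hat r_\vq}{\mathrm{std}}$ for correct answers and $\tfrac{-\hat r_\vq}{\mathrm{std}}$ for incorrect ones, where $\mathrm{std}=\sqrt{\hat r_\vq(1-\hat r_\vq)}$; dividing gives exactly $A^+_\vq=\sqrt{(1-\hat r_\vq)/\hat r_\vq}$ and $A^-_\vq=-\sqrt{\hat r_\vq/(1-\hat r_\vq)}$. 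Then I would differentiate the $\min$ in Eq.~\ref{eq:GRPO_pratical_loss} piecewise: when $\hat A>0$ the $\min$ selects $R_\theta^t\hat A$ as long as $R_\theta^t<1+\epsilon'$ (the clip branch is constant there with zero gradient, and beyond $1+\epsilon'$ the clipped branch is selected and contributes no gradient), giving the indicator $\mathcal I[R_\theta^t<1+\epsilon']$; when $\hat A<0$ the analogous analysis shows the gradient survives only when $R_\theta^t>1-\epsilon'$. Collecting terms reproduces Eq.~\ref{eq:gradient_GRPO}.

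The main obstacle is not any single computation but getting the sign conventions and the clipping/straight-through semantics exactly right: one has to be careful that $\mathrm{max\_v}$ passes the unclipped gradient $\tfrac{-\hat r_\vq}{1-\hat r_\vq}\nabla_\theta R_\theta^t$ through while using the clipped value in the denominator $\max[\cdot,\epsilon]^{-1}$, and that for GRPO the $\min$-of-two-terms structure produces a one-sided indicator whose direction flips with the sign of $\hat A$. I would also need to state explicitly the (standard, following DAPO) simplifications already flagged in the excerpt — dropped KL/entropy terms, token-level summation, prompt-level reuse of $\hat r_\vq$ — so that the bookkeeping matches the displayed formulas. Once those conventions are pinned down, both identities are immediate from the chain rule, and the parallel structure of the two results (same red $A^\pm_\vq$ factors, differing only in the green indicator versus the blue reciprocal weight) is what makes the subsequent on-policy equivalence proposition transparent.
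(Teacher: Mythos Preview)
Your proposal is correct and follows essentially the same approach as the paper's own proof: both split each loss into positive and negative contributions, compute the advantage $A^\pm_\vq$ from the binary-reward mean and std, differentiate the GRPO $\min$ piecewise according to the sign of $\hat A$ to obtain the one-sided indicators, and differentiate the NFT $\log$ terms using the straight-through semantics of $\mathrm{max\_v}$ before absorbing $\omega(\vq)$ and the chain-rule factor $-\hat r_\vq/(1-\hat r_\vq)$ into $A^-_\vq$. The only cosmetic difference is that you treat NFT before GRPO, whereas the paper does the reverse.
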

\begin{proof}

\textbf{(a) GRPO Gradient:} We first copy down the GRPO loss definition from Eq. \ref{eq:GRPO_pratical_loss}.
\begin{equation*}
\mathcal{L}^\text{GRPO}_{\mathcal{D}}(\theta) = - \sum_{\vq, \va,t}  \text{min} \left[R_\theta^t(\vq, \va) \hat{A}_{\vq,\va}, \text{clip}(R_\theta^t(\vq, \va), 1-\epsilon', 1+\epsilon')\hat{A}_{\vq,\va} \right].
\end{equation*}
The normalized advantage can be estimated as
\begin{align*}
        \hat{A}_{\vq,\va} := &\left[{r(\vq,\va) - \texttt{mean}\{r^{1:K}\}}\right]/ \; {\texttt{std}\{r^{1:K}\}} \\
    &\text{where}\quad \texttt{mean}\{r^{1:K}\} = \frac{1}{K} \left[\hat{r}_\vq K * 1 + (1-\hat{r}_\vq) K * 0\right] = \hat{r}_\vq\\
    &\text{and}\quad \;\;\; \texttt{std}\{r^{1:K}\} = \sqrt{\frac{1}{K}\left[\hat{r}_\vq K * (1 - \hat{r}_\vq)^2 + (1-\hat{r}_\vq) K * (0 - \hat{r}_\vq)^2\right]} = \sqrt{\hat{r}_\vq(1 - \hat{r}_\vq)}.
\end{align*}
When $\va$ is a positive answer, $r(\vq,\va)=1$. We have $A^+_\vq = \sqrt{\frac{1 - \hat{r}_\vq}{\hat{r}_\vq}}>0$.
\begin{equation*}
\mathcal{L}^\text{GRPO}_{\mathcal{D^+}}(\theta) = - \sum_{\vq, \va^+,t}  \text{min} \left[R_\theta^t(\vq, \va^+) , 1+\epsilon' \right]\hat{A}_{\vq,\va^+}
\end{equation*}
\begin{equation}
\label{eq:postive_grpo_grad}
\nabla_\theta \mathcal{L}^\text{GRPO}_{\mathcal{D^+}}(\theta) = - \sum_{\vq, \va^+,t}\textcolor{red}{A^+_\vq} \cdot \textcolor[HTML]{2ca02c}{\mathcal{I}\left[R_\theta^t(\vq,\va^+) < 1 + \epsilon'\right]}.
\end{equation}
When $\va$ is a negative answer, $r(\vq,\va)=0$. We have $A^-_\vq = -\sqrt{\frac{\hat{r}_\vq}{1 - \hat{r}_\vq}}<0$.
\begin{equation*}
\mathcal{L}^\text{GRPO}_{\mathcal{D^-}}(\theta) = - \sum_{\vq, \va^-,t}  \text{max} \left[R_\theta^t(\vq, \va^-) , 1-\epsilon' \right]\hat{A}_{\vq,\va^-}
\end{equation*}
\begin{equation}
\label{eq:negative_grpo_grad}
\nabla_\theta \mathcal{L}^\text{GRPO}_{\mathcal{D^-}}(\theta) = - \sum_{\vq, \va^-,t}\textcolor{red}{A^-_\vq} \cdot \textcolor[HTML]{2ca02c}{\mathcal{I}\left[R_\theta^t(\vq,\va^-) > 1 - \epsilon'\right]}.
\end{equation}
Combining Eq. \ref{eq:postive_grpo_grad} and Eq. \ref{eq:negative_grpo_grad}, we have 
\begin{equation*}
    \nabla_\theta \mathcal{L}^\text{GRPO}_{\mathcal{D}}(\theta) =  -\sum\Big\{ r  \textcolor{red}{A^+_\vq} \cdot \textcolor[HTML]{2ca02c}{\mathcal{I}\left[R_\theta^t(\vq,\va) < 1 + \epsilon'\right]} + (1-r) \textcolor{red}{A^-_\vq} \cdot \textcolor[HTML]{2ca02c}{\mathcal{I}\big[R_\theta^t(\vq,\va) > 1 - \epsilon'\big]}\Big\}  \nabla_\theta R_\theta^t(\vq,\va),
\end{equation*}
\textbf{(a) NFT Gradient:} We copy down the NFT loss definition from Eq. \ref{eq:pratical_loss}.
\begin{equation*}
\mathcal{L}^\text{NFT}_{\mathcal{D}}(\theta) = -\sum_{\vq,\va,t} \omega(\vq)\left[r \log R_\theta^t(\vq, \va) + (1-r) \log \text{max\_v}( \frac{1 - \hat{r}_\vq \; R_\theta^t(\vq, \va)}{1 - \hat{r}_\vq} , \epsilon)\right]
\end{equation*}
When $\va$ is a positive answer, $r(\vq,\va)=1$.
\begin{align*}
\mathcal{L}^\text{NFT}_{\mathcal{D^+}}(\theta) &= - \sum_{\vq, \va^+,t} \omega(\vq)\log R_\theta^t(\vq, \va)\\
&=- \sum_{\vq, \va^+,t} \sqrt{\frac{1-\hat{r}_\vq}{\hat{r}_\vq}}\log R_\theta^t(\vq, \va)\\
&=- \sum_{\vq, \va^+,t} \textcolor{red}{A^+_\vq}\log R_\theta^t(\vq, \va^+)\\
\end{align*}
\begin{equation}
\label{eq:postive_nft_grad}
    \nabla_\theta \mathcal{L}^\text{NFT}_{\mathcal{D^+}} = - \sum_{\vq, \va^+,t} \textcolor{red}{A^+_\vq} \textcolor[HTML]{1a60a2}{\frac{1}{R_\theta^t(\vq, \va^+)}}\nabla_\theta R_\theta^t(\vq, \va^+)
\end{equation}
When $\va$ is a negative answer, $r(\vq,\va)=0$.
\begin{align*}
\mathcal{L}^\text{NFT}_{\mathcal{D^-}}(\theta) &= -\sum_{\vq,\va^-,t} \omega(\vq)\log \left[\text{max\_v}( \frac{1 - \hat{r}_\vq \; R_\theta^t(\vq, \va^-)}{1 - \hat{r}_\vq} , \epsilon)\right]\\
&= -\sum_{\vq,\va^-,t} \sqrt{\frac{1-\hat{r}_\vq}{\hat{r}_\vq}}\log \left[\text{max\_v}( \frac{1 - \hat{r}_\vq \; R_\theta^t(\vq, \va^-)}{1 - \hat{r}_\vq} , \epsilon)\right]
\end{align*}
\begin{align}
    \nabla_\theta \mathcal{L}^\text{NFT}_{\mathcal{D^-}} =& - \sum_{\vq, \va^-,t} \sqrt{\frac{1-\hat{r}_\vq}{\hat{r}_\vq}}\left[\textcolor[HTML]{1a60a2}{\text{max}( \frac{1 - \hat{r}_\vq \; R_\theta^t(\vq, \va^-)}{1 - \hat{r}_\vq} , \epsilon)^{-1}} \cdot \frac{-\hat{r}_\vq}{1-\hat{r}_\vq} \cdot \nabla_\theta R_\theta^t(\vq, \va^-)\right]\nonumber\\
    =&- \sum_{\vq, \va^-,t} -\sqrt{\frac{\hat{r}_\vq}{1-\hat{r}_\vq}}\left[\textcolor[HTML]{1a60a2}{\text{max}( \frac{1 - \hat{r}_\vq \; R_\theta^t(\vq, \va^-)}{1 - \hat{r}_\vq} , \epsilon)^{-1}} \cdot \nabla_\theta R_\theta^t(\vq, \va^-)\right] 
 \nonumber\\
    =&- \sum_{\vq, \va^-,t} \textcolor{red}{A^-_\vq}\left[\textcolor[HTML]{1a60a2}{\text{max}( \frac{1 - \hat{r}_\vq \; R_\theta^t(\vq, \va^-)}{1 - \hat{r}_\vq} , \epsilon)^{-1}} \cdot \nabla_\theta R_\theta^t(\vq, \va^-)\right] \label{eq:negative_nft_grad}
\end{align}
Combining Eq. \ref{eq:postive_nft_grad} and Eq. \ref{eq:negative_nft_grad}, we have
\begin{equation*}
    \nabla_\theta \mathcal{L}^\text{NFT}_{\mathcal{D}}(\theta) =  -\sum\Big\{ r  \textcolor{red}{A^+_\vq} \cdot \textcolor[HTML]{1a60a2}{\frac{1}{R_\theta^t(\vq,\va)}} + (1-r) \textcolor{red}{A^-_\vq} \cdot \textcolor[HTML]{1a60a2}{\max\big[ \frac{1 - \hat{r}_\vq \; R_\theta^t(\vq, \va)}{1 - \hat{r}_\vq} , \epsilon \big]^{-1}} \Big\}  \nabla_\theta R_\theta^t(\vq,\va).
\end{equation*}
\end{proof}
\begin{remark}[\textbf{Dr. GRPO}] The main practical difference between Dr. GRPO \citep{drgrpo} and GRPO is that Dr. GRPO removes the std normalization term when estimating group-normalized advantages.  Following Proposition \ref{thrm:gradient}, we simply need to set $\omega(\vq) = 1 - \hat{r}_\vq$ instead of $\omega(\vq) = \sqrt{\frac{1-\hat{r}_\vq}{\hat{r}_\vq}}$ to align with the Dr. GRPO loss function.
\end{remark}
\begin{proposition}[\textbf{On-policy Gradient Equivalence}] Following the set up of Proposition \ref{thrm:gradient} and let $\epsilon \leq 1$, GRPO and NFT loss gradient are equivalent in on-policy training:
\begin{equation*}
    R_\theta^t(\vq,\va) = 1 \quad \Longrightarrow \quad \nabla_\theta \mathcal{L}^\text{NFT}_{\mathcal{D}}(\theta) = \nabla_\theta \mathcal{L}^\text{GRPO}_{\mathcal{D}}(\theta)
\end{equation*}
\end{proposition}
\begin{proof}
    The proof is simple. When on-policy, $R_\theta^t(\vq,\va) = 1$.
    
    Regarding positive answers $a^+$, GRPO gradient (Eq. \ref{eq:postive_grpo_grad}) and NFT gradient (Eq. \ref{eq:postive_nft_grad}) both become
\begin{equation*}
    \nabla_\theta \mathcal{L}^\text{GRPO}_{\mathcal{D^+}}(\theta) =\nabla_\theta \mathcal{L}^\text{NFT}_{\mathcal{D^+}}(\theta) =\textcolor{red}{A^+_\vq}\nabla_\theta R_\theta^t(\vq,\va^+).
\end{equation*}
    Regarding negative answers $a^-$, GRPO gradient (Eq. \ref{eq:negative_grpo_grad}) and NFT gradient (Eq. \ref{eq:negative_nft_grad}) both become
\begin{equation*}
    \nabla_\theta \mathcal{L}^\text{GRPO}_{\mathcal{D^-}}(\theta) =\nabla_\theta \mathcal{L}^\text{NFT}_{\mathcal{D^-}}(\theta) =\textcolor{red}{A^-_\vq}\nabla_\theta R_\theta^t(\vq,\va^-).
\end{equation*}
\end{proof}
% \textcolor{red}{\large Appendix PDF is provided in the supplementary material.}

\section{Discussion for Continuous Rewards}
\label{sec:reward}
In this section, we explain how NFTs can be extended to continuous reward settings. 

First, we define $r:= p(\mathbf{o}=1|\vq, \va) \in [0,1]$, where $\mathbf{o}=1$ means the answer is an optimal answer.

Then we define positive and negative distributions.

\begin{equation*}
    \pi^+(\va|\vq) := \pi(\va|\vq, \mathbf{o}{=}1) = \frac{\pi_\text{old}(\va|\vq) p(\mathbf{o}{=}1|\vq, \va)}{\sum_{A}\pi_\text{old}(\va|\vq) p(\mathbf{o}{=}1|\vq, \va)} \propto \pi_\text{old}(\va|\vq) r(\vq, \va),
\end{equation*}
\begin{equation*}
    \pi^-(\va|\vq) := \pi(\va|\vq, \mathbf{o}{=}0) = \frac{\pi_\text{old}(\va|\vq) p(\mathbf{o}{=}0|\vq, \va)}{\sum_{A}\pi_\text{old}(\va|\vq) p(\mathbf{o}{=}0|\vq, \va)}  \propto \pi_\text{old}(\va|\vq) [1-r(\vq, \va)],
\end{equation*}
\begin{equation*}
    r_\vq \pi^+(\va|\vq) + \left[1 - r_\vq\right]\pi^-(\va|\vq) = \pi_\text{old}(\va|\vq),
\end{equation*}
Finally we copy training objective Eq. \ref{eq:theory_loss} below.
\begin{equation}
\label{eq:copyloss}
    \mathcal{L}^\text{NFT}_{(\va, \vq, r) \sim \mathcal{D}}(\theta) = r \left[ - \log \frac{\pi^+_\theta(\va|\vq)}{\pi_\text{old}(\va|\vq)} \right] + (1-r) \left[-\log \frac{1 - r_\vq \frac{\pi^+_\theta(\va|\vq)}{\pi_\text{old}(\va|\vq)}}{1 - r_\vq}\right]
\end{equation}

\begin{theorem}[\textbf{NFT for continuous rewards}]
\label{thrm:negative_loss_continuous} Consider the training objective of Eq. \ref{eq:copyloss} with arbitrary continuous rewards $r(\vq, \va) \in [0,1]$. Suppose we can accurately estimate $r_\vq = \E_{\pi_\text{old}(\va|\vq)} r(\vq, \va)$. Assuming unlimited data and model capacity, the optimal solution satisfies
\begin{equation*}
    \forall \vq, \va: \quad\pi^+_{\theta^*}(\va|\vq) = \pi^+(\va|\vq)
\end{equation*}
\end{theorem}
\begin{proof}
    \begin{align*}
    \nabla_\theta \mathcal{L}^\text{NFT}_{(\va, \vq, r) \sim \mathcal{D}}(\theta) &= \nabla_\theta \left[ r(\vq,\va) \left[ - \log \pi^+_\theta(\va|\vq) \right] + (1-r(\vq,\va)) \left[-\log \frac{\pi_\text{old}(\va|\vq) - r_\vq \pi^+_\theta(\va|\vq)}{1 - r_\vq}\right] \right]\\
&= \left[ r(\vq,\va) \left[ - \frac{1}{\pi^+_\theta(\va|\vq)}  \right] + (1-r(\vq,\va)) \left[  \frac{ r_\vq}{\pi_\text{old}(\va|\vq) - r_\vq \pi^+_\theta(\va|\vq)}  \right] \right]\nabla_\theta \pi_\theta (\va|\vq)\\
&= \left[ \frac{r_\vq \pi^+_\theta(\va|\vq) - r(\vq,\va)\pi_\text{old}(\va|\vq)}{\pi^+_\theta(\va|\vq) [\pi_\text{old}(\va|\vq) - r_\vq \pi^+_\theta(\va|\vq)]} \right]\nabla_\theta \pi_\theta (\va|\vq)\\
    \end{align*}
When $r_\vq \pi^+_\theta(\va|\vq) - r(\vq,\va)\pi_\text{old}(\va|\vq) = 0$, that is
\begin{equation*}
    \pi^+_\theta(\va|\vq) = \frac{r(\vq,\va)\pi_\text{old}(\va|\vq)}{r_\vq} = \pi^+(\va|\vq)
\end{equation*}
We have $\nabla_\theta \mathcal{L}^\text{NFT}_{(\va, \vq, r) \sim \mathcal{D}}(\theta)=0$ constantly holds.
\end{proof}

\section{Experiment Details}
\label{sec:exp_details}
\textbf{General training setup.} All algorithms are implemented based on the official DAPO codebase within the VeRL framework. We use a learning rate of 1e-6 with a linear warm-up schedule across all experiments. At each rollout step, we generate 16 answers for each of 512 sampled questions, then split the data into 16 mini-batches and train the policy network for 16 gradient steps. Models are trained for 320 rollout steps, totaling over 5,000 gradient steps. Unless otherwise specified, we follow DAPO’s default design choices, including dynamic data sampling, token-level loss normalization, and no KL regularization. For 7B training, we restrict context lengths to 4K and use 64 NVIDIA H100 GPUs. For 32B training, we restrict context lengths to 32K (DAPO) or 16K (others), and use 128-256 NVIDIA H100 GPUs.

\textbf{DAPO.} is a variant of GRPO that has achieved state-of-the-art AIME performance on 32B models. Our NFT implementation is adapted from the official DAPO codebase, based on the VeRL framework \citep{verl}. NFT inherits most of DAPO's hyperparameters and design choices, including dynamic data sampling, token-level loss normalization, and no KL regularization. We adopt a faithful implementation of the official DAPO codebase, keeping all hyperparameters untouched.

\textbf{NFT.} Compared with DAPO, NFT modifies the context length to 16K for 32B training. We find this does not significantly affect performance, but noticeably reduces data collection time.  Another difference is that we remove the overlong reward shaping technique of DAPO to ensure a binary reward outcome. In our setting, truncated answers are treated as negative, which sufficiently discourages overlong answers. The negative ratio clipping parameter is set to $\epsilon = 1.0$, and the prompt weight is defined as $\omega(\vq) = 1 - r_\vq$.

\textbf{RFT} is a simple but effective SL baseline that only fine-tunes LLMs on positive answers and throws away negative data. In our implementation, the main difference between RFT and NFT is that RFT zeros out negative data loss and keeps a constant prompt weight $\omega(\vq) = 1$ during training. For RFT, we zero out negative data loss in NFT implementation and keep a constant prompt weight $\omega(\vq) = 1$ during training. 

\textbf{GRPO} does not adopt the dynamic sampling technique proposed by DAPO. Rather, it simply leverages all data for training, even though the gradient for all-positive or all-negative questions should be zeroed out automatically by the algorithm itself \citep{dapo}. Other DAPO-related techniques are kept, such as setting positive clipping parameter to a higher value $\epsilon'_{+} = 0.28$. Since GRPO requires less time for sampling data, we train GRPO models for 580+ rollout steps instead of 320 steps, which roughly takes the same training time compared with DAPO experiments.

\textbf{Dr. GRPO} is modified from our GRPO implementation. The only difference is the removal of the std normalization when computing group-normalized advantages.

\textbf{Iterative DPO}. Comparing DPO with other RL algorithms in a head-to-head fashion is difficult because DPO requires paired data to calculate the training objective, while we sample 16 unpaired answers for each question. To solve this problem, we take the implementation of InfoNCA \citep{NCA}, a variation of the DPO algorithm that can handle $K>2$ responses per question:
\begin{equation*}
    \mathcal{L}^\text{InfoNCA}_{(\vq,\va^{1:K}, r^{1:K}) \sim \mathcal{D}}(\theta) =   - \sum_{k=1}^K\left[\frac{r^{(k)}}{\sum_{i=1}^K r^{(i)}}\log \frac{e^{\beta R_\theta(\vq,\va^k)}}{\sum^K_{i=1} e^{\beta R_\theta(\vq,\va^i)}}\right]
\end{equation*}
InfoNCA is guaranteed to become DPO algorithm in $K=2$ settings. We ablate $\beta \in \{0.1, 0.01, 0.02\}$ and report the best averaged validation result. We find InfoNCA training to be unstable and add an SFT regularization term to the original loss function for stabilizing the training process.

\textbf{Validation details.} Validation is performed with a top-p value of $0.7$. The temperature is set to $1.0$ for 7B models and $0.6$ for 32B models, with context lengths matching the training configuration. We use \texttt{math-verify} \citep{mathverify2024} as the verifier during training validation, and \texttt{simpleRL} \citep{zeng2025simplerl} for final evaluation. The \texttt{DAPO-17k} benchmark consists solely of training questions whose ground truth answers are integers and includes both a prefix and a lastfix for each question. Accordingly, for validation on AIME and AMC questions, we adapt the prompt format to match the training pattern. For other benchmarks with non-integer answers, question prompts remain unmodified.
\section{Additional Experiment Results}
\label{sec:exp_add_results}
\begin{figure}[h]
\vspace{3mm}
\centering
\includegraphics[width = .48\linewidth]{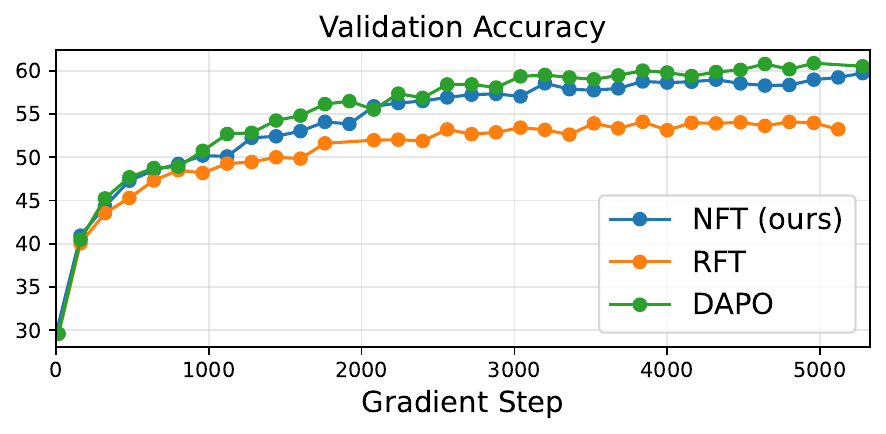}
\includegraphics[width = .48\linewidth]{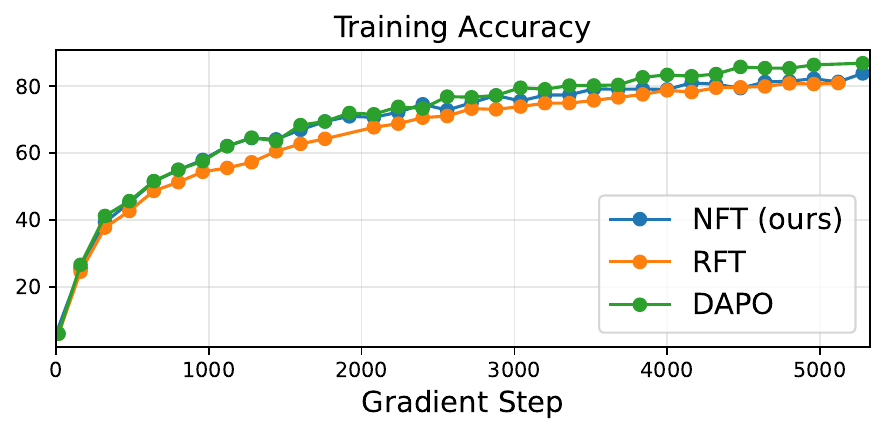}
\vspace{1mm}
\includegraphics[width = .3\linewidth]{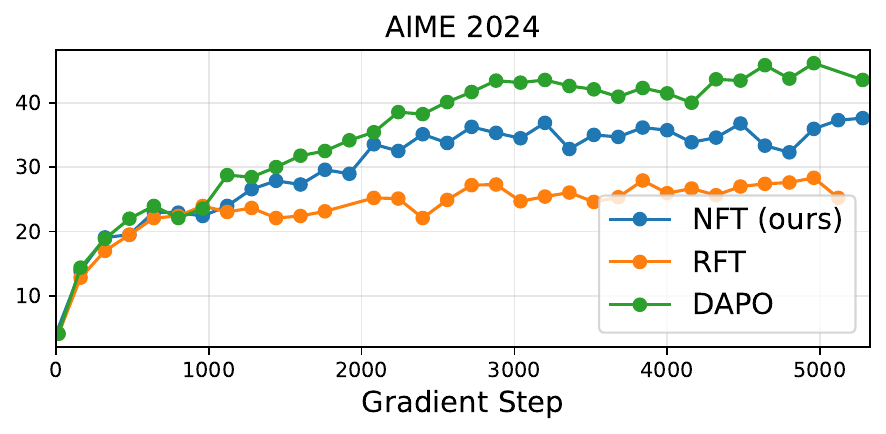}
\includegraphics[width = .3\linewidth]{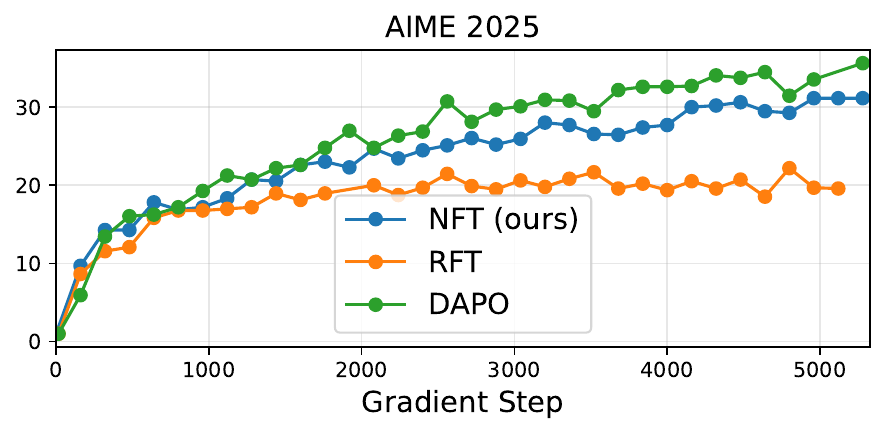}
\includegraphics[width = .3\linewidth]{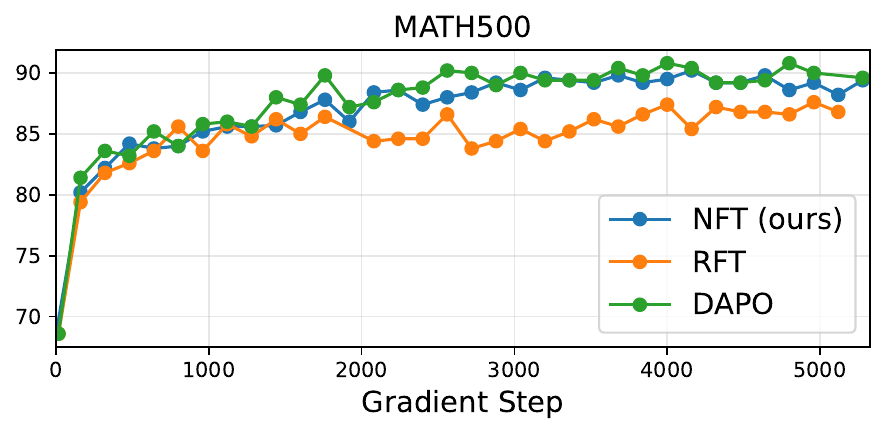}\\
\vspace{-1mm}
\includegraphics[width = .3\linewidth]{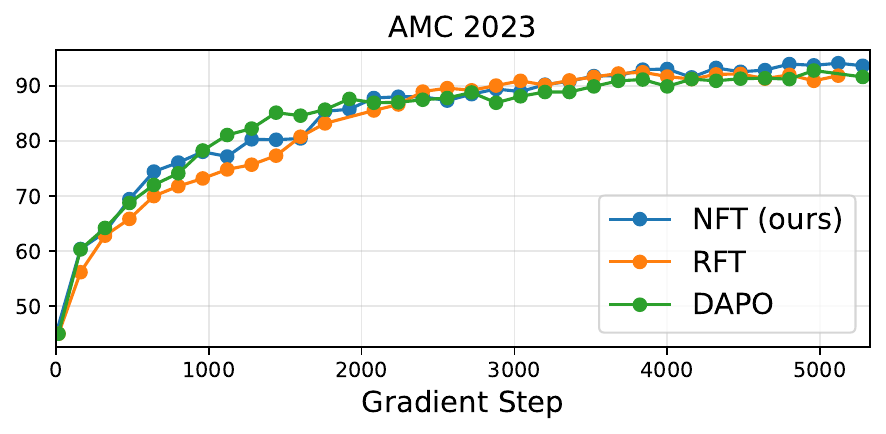}
\includegraphics[width = .3\linewidth]{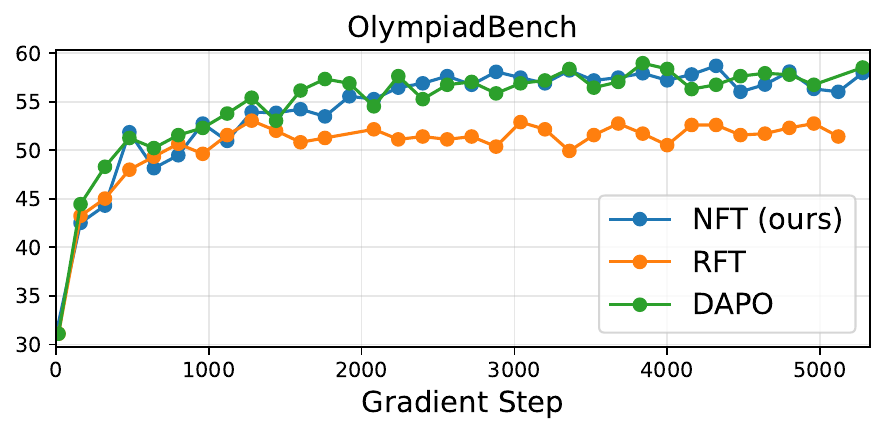}
\includegraphics[width = .3\linewidth]{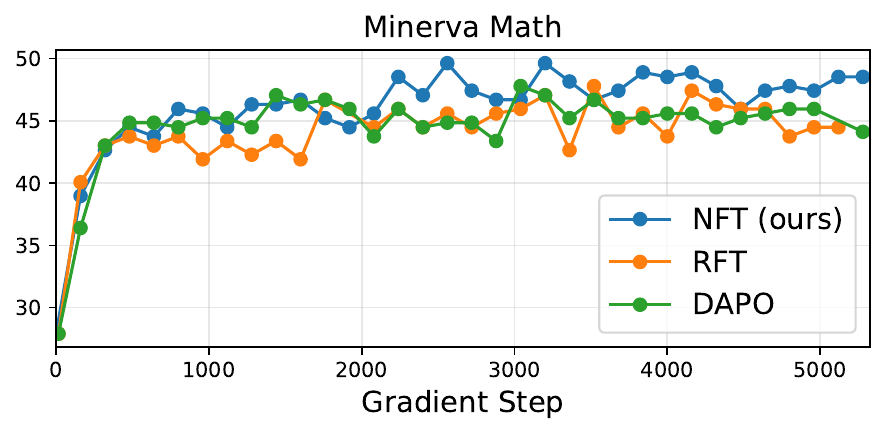}
% }
\caption{\label{fig:32bplot}Training and validation accuracy curves for 32B experiments. }
\end{figure}
\end{document}